\DeclareMathOperator*{\argmin}{arg\,min}
\def\diag{\mbox{diag}}
\def\rank{\mbox{rank}}
\def\grad{\mbox{\text{grad}}}
\def\sgn{\mbox{sgn}}
\def\tr{\mbox{tr}}
\def\bzeta{\mbox{{\boldmath $\zeta$}}}
\def\bsigma{\mbox{{\boldmath $\sigma$}}}
\def\bzeta{\mbox{{\boldmath $\zeta$}}}
\def\bxi{\mbox{{\boldmath $\xi$}}}
\def\bvarsigma{\mbox{{\boldmath $\varsigma$}}}
\def\bXi{\mbox{{\boldmath $\Xi$}}}
\def\mA{{\mathcal A}}
\def\mB{{\mathcal B}}
\def\mM{{\mathcal M}}
\def\mP{{\mathcal P}}
\def\mU{{\mathcal U}}
\def\mV{{\mathcal V}}
\def\0{{\bf 0}}
\def\1{{\bf 1}}
\def\bA{{\bf A}}
\def\bB{{\bf B}}
\def\bD{{\bf D}}
\def\bE{{\bf E}}
\def\bG{{\bf G}}
\def\bH{{\bf H}}
\def\bI{{\bf I}}
\def\bM{{\bf M}}
\def\bU{{\bf U}}
\def\bV{{\bf V}}
\def\bX{{\bf X}}
\def\bY{{\bf Y}}
\def\bZ{{\bf{Z}}}
\def\bd{{\bf d}}
\def\bn{{\bf n}}
\def\bv{{\bf v}}
\def\by{{\bf y}}
\def\mmR{{\mathbb R}}
\def\mMLr{{\mM_{\leq r}}}
\def\by{\bar{\y}}
\def\trsp{{\sf T}}
\def\mRMD{{\mathrm{D}}}
\def\mRMD{{\mathrm{D}}}
\def\shijie{\textcolor{black}}
\def\shijienew{\textcolor{black}}
\def \twoone{\shijie{2,1}}
\def \ieShijie{\shijie{\emph{i.e.},}}
\def \egShijie{\shijie{\emph{e.g.},}}
\def\citep{\cite}
\def\citet{\cite}
\newtheorem{thm}{Theorem}
\newtheorem{prop}{Proposition}
\newtheorem{lemma}{Lemma}
\newtheorem{remark}{Remark}
\newenvironment{myblist}{\vspace{-.3cm}
\begin{itemize}
                        \setlength{\itemsep}{0pt}
                        \setlength{\parskip}{0pt}
                        \setlength{\parsep}{0pt}}{\end{itemize}
                        \vspace{-.2cm}}
\title{Scalable Trace-norm Minimization \\by Subspace Pursuit Proximal Riemannian Gradient}
\author{ {\bf Mingkui Tan\dag\thanks{\dag School of Computer Science, The University of Adelaide; \ddag School of Computer Engineering, Nanyang Technological University; \dag\dag  Computer Science in the School of Computing and Mathematics, Charles Sturt University.}} \\
%School of Computer Science\\
%The University of Adelaide\\
%Pittsburgh, PA 15213 \\
\And
{\bf Shijie Xiao\ddag}  \\
%School of Computer Engineering\\
%Nanyang Technological University\\
\And
{\bf Junbin Gao\dag\dag}   \\
%Charles Sturt University \\
\And
{\bf Dong Xu\ddag}   \\
%Affiliation \\
%Address    \\
%(if needed)\\
\And
{\bf Anton Van Den Hengel \dag}   \\
%Affiliation \\
%Address    \\
%(if needed)\\
\And
{\bf Qinfeng Shi \dag}   \\
%Affiliation \\
%Address    \\
%(if needed)
}
\begin{document}

\maketitle

\begin{abstract}
Trace-norm regularization plays a vital role in many learning tasks, such as low-rank matrix recovery (MR), and low-rank representation (LRR). Solving this problem directly can be computationally expensive due to the unknown rank of variables or large-rank singular value decompositions (SVDs). To address this, we propose a proximal Riemannian gradient (PRG) scheme which can efficiently solve trace-norm regularized problems defined on real-algebraic variety $\mMLr$ of real matrices of rank at most $r$. Based on PRG, we further present a simple and novel subspace pursuit (SP) paradigm for general trace-norm regularized problems without the explicit rank constraint $\mMLr$. The proposed paradigm is very scalable by avoiding large-rank SVDs. Empirical studies on several tasks, such as  matrix completion and LRR based subspace clustering, demonstrate the superiority of the proposed paradigms over existing methods.

%Nevertheless, $r$ is often nontrivial to estimate.
%We show how to use PRG iteratively to solve general trace-norm regularized problems with unknown rank $r$.which essentially solves a sequence of subproblems defined on $\mMLr$ with increasing $r$
%We further  To address this, we  present a simple and novel subspace pursuit (SP) paradigm for general trace-norm regularized problems. SP essentially solves a sequence of subproblems defined on real-algebraic variety $\mMLr$ of real matrices of rank at most $r$.  The subproblems defined on $\mMLr$ can be efficiently solved by our proposed proximal Riemannian gradient (PRG) schemes.
%The proposed SP paradigm can even accelerate PRG on ill-conditioned problems.

\end{abstract}
% to address trace-norm regularized problems restricted Based on PRG, we present a simple subspace pursuit paradigm to address general trace-norm regularized problems.the rank of the optimal solution does not require prior knowledge of , and does not require the prior knowledge of $r$,
\section{Introduction}
\label{introduction}

Trace-norm regularization plays a vital role in various areas, such as machine learning~\cite{yuan2007dimension,argyriou2008convex}, data mining~\cite{Salakhutdinov2010}, computer vision and image processing~\cite{LRRPAMI,peng2012rasl,wang2014robust}. Most trace-norm based problems can be formulated into the following general formulation~\cite{lin2011linearized}:
%\vspace{-0.2in}
%\begin{small}
\begin{eqnarray}\label{eq:general_form}
\min_{\bX,\bE} ~||\bX||_* + \lambda\Upsilon(\bE), ~\mathrm{s.t.} ~~ \mA(\bX) + \mB(\bE) = \bD,
\end{eqnarray}
%\end{small}
%\vspace{-0.25in}
where $\lambda$ is a regularization parameter, $||\bX||_*$ is the trace-norm (also known as the nuclear-norm) of a matrix $\bX\in\mmR^{m\times n}$, both $\mA$ and $\mB$ are linear operators depending on specific applications~\cite{lin2011linearized}, $\bD$ denotes data or observations, $\bE$ can be considered as an error term and %
%$\Upsilon(\bE)$ denotes some regularizer on $\bE$ which is possibly non-smooth. %
\shijie{$\Upsilon(\bE)$ is a regularizer on $\bE$ which is possibly non-smooth.} %
The trace norm $\|\bX\|_*$ is the tightest convex lower bound to the rank function $\rank(\bX)$~\cite{Recht2010}, and the minimization of (\ref{eq:general_form}) encourages the variable $\bX$ to be low-rank~\cite{fazel2002matrix,Hazan2008,Cand2009Exact}.
Among various trace-norm based problems, the low-rank matrix recovery (MR)~\cite{Cand2010Tight}, %
%and low-rank representation~\cite{LRRPAMI}, %
\shijie{and low-rank representation~(LRR)~\cite{LRRPAMI},} %
have gained particular interest in the last decade.

%The matrix recovery
%MR %
MR \shijie{\cite{Cand2010Tight}}~%
seeks to recover a low-rank matrix $\bX$ from  partial observations that are %
recorded
in a vector $\bd \in \mmR^l$, where $l\!\ll\! mn$. If there are no outliers in the observations, one can recover $\bX$ with high probability by solving the following problem~\cite{Cand2010Tight,keshavan2010matrix}:
%
%\vspace{-0.2in}
%\begin{small}
\begin{eqnarray}\label{eq:mc}
\min_{\bX} \! \|\bX\|_*,~~\textrm{s.t.}~~\mA(\bX) =\bd
%.
\shijie{,}
\end{eqnarray}
%\end{small}
%\vspace{-0.25in}
%
%This problem is a simplified problem of formulation (\ref{eq:general_form}).
%This problem can be deemed as a simplified version of formulation (\ref{eq:general_form}). %
\shijie{which} can be deemed as a simplified version of formulation (\ref{eq:general_form}). %
%<<<<<<< .mine
MR %
has been successfully applied in many tasks such as matrix completion filtering~\cite{Salakhutdinov2010,recht2013parallel}. However, the recovery performance by solving problem (\ref{eq:mc}) might be seriously degraded if the observations contain severe outliers~\cite{chen2011robust,chen2013low}. To improve the robustness, we may introduce an additional variable $\bE$ into the constraint $\mA(\bX) =\bd$ as in (\ref{eq:general_form}), and regularize it using $\ell_1$-norm regularization (\ieShijie~$||\bE||_1$) or $\ell_{\twoone}$-norm regularization (\ieShijie~$||\bE||_{\twoone}$)~\cite{chen2011robust,chen2013low}.
\shijie{LRR~\cite{LRRPAMI}} %
seeks to find a %
%low-rank representation %
low-rank representation \shijie{$\bX \in \mmR^{n\times n}$} %
of given data $\bD \in \mmR^{m\times n}$ by solving an optimization problem
%A in formulation (\ref{eq:general_form})
of the following form:
%/A
 $\min_{\bX,\bE} {\lambda}\|\bX\|_{*} +  \|\bE\|_{2,1}~~
\mbox{s.t.}~~ \bD\bX + \bE  = \bD, $ %
%where $\bD \in \mmR^{m\times n}$ denotes the data of $n$ samples, %
where \shijie{$\bD$} denotes the \shijie{given data with $n$ samples}, %
and $\|\bE\|_{2,1}$ encourages the representation error $\bE$ to be column-wise sparse. LRR has been widely applied in many real-world tasks such as motion segmentation and face clustering~\cite{LRRPAMI,lin2011linearized,vidal2010tutorial}.

%a simplified formulation of (\ref{eq:general_form})
 %\cite{Yuanyuan2014} UAI Mishar, put here

%$\min_{\bX} ~\lambda||\bX||_* , ~\mathrm{s.t.} ~ \mA(\bX)  = \bD$ or

Many algorithms have been proposed to solve
trace-norm regularized problems~\cite{Hazan2008,Laue2012,Jaggi2010,Zhouchen2010ALM,Toh2010APG,zhang2012accelerated,Cai2010SVT,sra2011optimization,shen2012augmented}, but most focus on solving problem (\ref{eq:mc}), such as the singular value thresholding ({SVT})~\citep{Cai2010SVT}, augmented Lagrangian method ({ALM}) and alternating
direction method (ADM)~\cite{Zhouchen2010ALM,sra2011optimization,shen2012augmented}.
Unlike these methods, some researchers proposed to solve an equivalent problem to problem (\ref{eq:mc})~\cite{Toh2010APG,ji2009accelerated}: $\min_{\bX} \|\bX\|_* + \frac{\gamma}{2}\|\mA(\bX)-\bd\|_2^2,$ where $\gamma$ is a regularization parameter. This problem is known as the \emph{{matrix lasso}}, and can be addressed by proximal gradient (PG) or accelerated proximal gradient (APG)~\cite{Combettes2005,Toh2010APG,ji2009accelerated}.

The optimization of problem (\ref{eq:general_form}) is more challenging due to the additional variable $\bE$. By minimizing $\bX$ and $\bE$ alternatively, the aforementioned methods (\emph{e.g.}, ADM and PG) have been extended to solve this problem~\cite{Zhouchen2010ALM,lin2011linearized}.

The above methods have shown great success in
practice~\cite{Toh2010APG,Zhouchen2010ALM}. However, the
optimization usually involves repetitive SVDs due to the SVT operation, making them inefficient on large-scale problems~\cite{Bonnabel2011,Vandereycken2013}. Using homotopy strategies and applying rank prediction techniques may accelerate the convergence speed with truncated SVDs~\cite{Toh2010APG,Zhouchen2010ALM,lin2011linearized}. %
However, the rank prediction could be non-trivial in general, %
and large-rank SVDs is still inevitable if %
the optimal solution has a large rank.
%the rank of ${\bX}^*$ %
%
%$\rank({\bX}^*)$ %
%is large, where ${\bX}^*$ is an optimum.

 %However, they suffer from the same issues of ADM and PG based methods.

%\vspace{-0.2in}
%\begin{small}
%\begin{eqnarray}\label{eq:matrix_lasso}

%\end{eqnarray}
%\end{small}
%\vspace{-0.2in}

%algorithm for MR that iteratively increases the rank of the matrix
%to be recovered by a fixed integer $\rho$. By factorizing $\bX = \!\bG \bH^{\top}\!$,

To develop more scalable algorithms, some researchers have tackled a version of the problem in which it is assumed that the rank of $\bX$ is known, \egShijie~$\!\!\rank(\bX) \!\!=\!\! {r}$, and thus proposed to solve a variational form of problem (\ref{eq:mc})~\cite{Recht2010,Jaggi2010}:
%\vspace{-0.2in}
%\begin{small}
%\begin{eqnarray}\label{eq:fact1}
$\min_{\bG,\bH}  \|{\bG}\|_F^2 + \|{\bH}\|_F^2, ~\textrm{s.t.} ~
\mA(\bG\bH^{\top}) = \bD,$ %
where ${\bG}\!\in\!\mmR^{m\times {r}}$ and ${\bH}\!\in \!\mmR^{n\times {r}}$.
%\end{eqnarray}
%\end{small}
%\vspace{-0.3in}
Many methods have been developed to address this problem, such as gradient based methods~\cite{Recht2010,Jaggi2010} and stochastic gradient methods~\cite{Wen2012,recht2013parallel,Avron2012}. However, these methods may still suffer from slow convergence speeds~\cite{Mishra2012,Vandereycken2013}.

Recently, fixed-rank methods  by exploiting the smooth geometry of matrices on fixed-rank manifolds have shown great advantages in computation for solving matrix recovery problems~\cite{Bonnabel2011,Boumal2012,absil2014two}, such as the low-rank geometric conjugate gradient method ({LRGeomCG})~\cite{Vandereycken2013}, %
the quotient geometric matrix completion method ({qGeomMC})~\cite{Mishra2012}, %
and the method of scaled gradients on Grassmann manifolds for matrix completion ({ScGrassMC})~\cite{Ngo2012}.
However, these methods can only deal with smooth objectives. Moreover, the rank parameter $r$ is usually unknown
%Aand it is also nontrivial to set it.
in practice, and nontrivial to discover.
%/A

% This problem is equivalent to the trace-norm problem (\ref{eq:mc})
%given ${r}\!>\!rank(\bX^*)$~\cite{Recht2010}.

%These methods minimize a smooth objective function $f(\bX)$ under a fixed-rank constraint.

%These methods minimize a smooth objective function $f(\bX)$ under a fixed-rank constraint: $\min_{\bX} f(\bX),  ~~~\textrm{s.t.} ~~~
%\rank(\bX) = r.$
%\vspace{-0.2in}
%\begin{small}
%\begin{eqnarray}\label{eq:fact1}
%\min_{\bX} f(\bX),  ~~~\textrm{s.t.} ~~~
%\rank(\bX) = r,
%\end{eqnarray}
%\end{small}
%\vspace{-0.3in}

Motivated by the superiority of Riemannian gradient-based methods on low-rank matrix recovery problems~\cite{Vandereycken2013}, in this paper, we exploit  classical proximal gradient methods and  geometries
%A on
of the
%/A
real-algebraic variety $\mMLr$ to address the problem in (\ref{eq:general_form}).  The main contributions of this paper are as follows:

% for solving trace-norm regularized problems success of proximal gradient methods and  the the general trace-regularized
%and proximal gradient methods such as Toh2010APG on vector spaces
%\begin{itemize} by exploiting the smooth geometry of low-rank matrices
\begin{myblist}
\item
%(1)
We propose a proximal Riemannian gradient (PRG) scheme to address trace-norm regularized problems with explicit rank constraint $\rank(\bX)\leq r$. %
%The computational cost of proximal gradient methods can be vastly reduced by avoiding many SVDs. Without requiring large-rank SVDs, the proposed methods are scalable to large-scale problems.
By exploiting geometries on $\mMLr$, PRG avoids repetitive large-scale SVDs of classical proximal methods, making it more scalable.

% well on %
%large scale problems.
%\shijie{large-scale} problems.

%****Need to be polished here***  Since large-rank SVDs are avoided ,
%The local convergence rate of PRG is also investigated.varieties of low-rank matrices

\item
%To avoid the rank parameter setting issue in fixed-rank methods,  w
%(2)
To address general trace-norm regularized problems in (1), we present a simple and novel active subspace framework which incorporates PRG as a slave solver. This framework does not require the prior knowledge of $r$ and large-rank SVDs, and can even accelerate the convergence speed of PRG.
\end{myblist}
%the rank of the optimal solution

%\end{table}

\section{Notations and Preliminaries}\label{sec:preliminary}
%In this section, we define some notations that will be used in this
%paper, and then describe some preliminaries about the matrix
%restricted isometry property (RIP) condition and the differential
%geometry regarding fixed-rank matrices.
%\vspace{-0.05in}
%\subsection{Notations}
Let the superscript ${}^{\trsp}$ denote the
transpose of a vector/matrix, $\textbf{0}$ be a vector/matrix with all
zeros,  $\diag(\bv)$ be a diagonal matrix with diagonal
elements equal to $\bv$, ${\langle\bA,\bB\rangle} = \tr(\bA\bB^{\trsp})$
be the inner product of $\bA$ and $\bB$, and $\|\bv\|_p$ be the $\ell_p$-norm of a vector $\bv$.  Let $\mA$ be a linear operator with ${\mA^*}$ being its adjoint operator.  The operator $\max(\bsigma,\bv)$ operates on each dimension of $\bsigma$.
Let $\bX={\bU} \diag({\bsigma})\bV^{\trsp}$ be the SVD of $\bX\in \mmR^{m\times n}$. The nuclear norm of $\bX$ is defined as $\|\bX\|_* =\|\bsigma\|_1
= \sum_i |\sigma_i|$ and the Frobenius norm of $\bX$ is defined as
$\|\bX\|_F =  \|\bsigma\|_2$. Lastly, for any convex function $\Omega(\bX)$, let $\partial \Omega(\bX)$ denote its subdifferential at $\bX$.

We
%A also introduce some basics of the geometries
now introduce some of the basic notions of the geometry
%/A
of fixed-rank matrices and matrix varieties as follows.

\textbf{Geometries of Fixed-rank Matrices}. The fixed rank-$r$ matrices lie on a smooth submanifold defined below
%
%\vspace{-0.25in}
%\begin{small}
\begin{eqnarray}
\mM_{r} &=& \{\bX\in \mmR^{m\times n}: \rank(\bX) = r\} \nonumber \\
&=& \{\bU\diag(\bsigma)\bV^{\trsp}: \bU \in \textrm{St}_{r}^{m}, \bV \in
\textrm{St}_{r}^{n}, ||\bsigma||_0 = r\}, \nonumber
\end{eqnarray}
%\end{small}
%\vspace{-0.25in}
%
\noindent where $\textrm{St}_{r}^{m} = \{\bU \in \mmR^{m\times r}:
\bU^{\trsp}\bU = \bI\}$ denotes the Stiefel manifold of $m\times r$ real and
orthonormal matrices, and the entries in $\bsigma$ are in descending order~\cite{Vandereycken2013}. Moreover, the tangent space $T_{\bX}\mM_{r}$
at $\bX$ is given by

\vspace{-0.20in}
\begin{small}
\begin{eqnarray}\label{eq:tangent_space}
T_{\bX}\mM_{r} \!\!\!\!\!&=&\!\!\!\!\! \{\bU\bM\bV^{\trsp} \!\!+\! \bU_p\bV^{\trsp} \!\!+\!
\bU\bV_p^{\trsp}: \!\bM \in \mmR^{r\times r},\! \bU_p \in \mmR^{m\times r}, \nonumber \\
&& \bU_p^{\trsp}\bU = \0,\! \bV_p \in \mmR^{n\times r}, \!\bV_p^{\trsp}\bV = \0\}.
\end{eqnarray}
\end{small}
\vspace{-0.20in}

Given $\bX \in \mM_{r}$ and ${\bA},{\bB} \in T_{\bX}\mM_{r}$, by defining a metric $g_{\bX}({\bA},{\bB}) = \langle{\bA},{\bB}\rangle$,
$\mM_{r}$ is a \textbf{Riemannian manifold} by restricting
$\langle\bA,\bB\rangle$ to the \emph{tangent bundle}~\cite{AMS2008}.\footnote{The
\emph{tangent bundle} is defined as the disjoint union of all
tangent spaces
$T\mM_{r} = \bigcup_{\bX\in \mM_{r}}\{\bX\} \times T_{\bX}\mM_{r}.$} The norm of a tangent vector
$\bzeta_{\bX} \in T_{\bX}\mM_{r}$ evaluated at $\bX$ is defined as
$||\bzeta_{\bX}|| = \sqrt{\langle\bzeta_{\bX},\bzeta_{\bX}\rangle}$.

%where $\mRMD f(\bX)[\bxi] = \lim_{\alpha\downarrow 0} \frac{1}{\alpha} [f(\bX + \alpha\bxi)-f(\bX)]$ denotes the directional derivatives.

Once the metric is fixed, the notion of the gradient of an objective function can be introduced. %
For a Riemannian manifold, the \textbf{Riemannian gradient} of a smooth function $f: \mM_r \rightarrow \mmR$ at $\bX\in\mM_{r}$ is defined as %
the unique tangent vector $\grad{f(\bX)}$ in $T_{\bX}\mM_{r}$, such that %
$\langle\grad{f(\bX)}, \bxi\rangle = \mRMD f(\bX)[\bxi], ~~\forall \bxi \in T_{\bX}\mM_{r}$. %
%
%The \textbf{Riemannian gradient} of a smooth function $f: \mM_r \rightarrow \mmR$ is defined as the unique tangent vector $\grad{f(\bX)}$ in $T_{\bX}\mM_{r}$ %
%such that $\langle\grad{f(\bX)}, \bxi\rangle = \mRMD f(\bX)[\bxi], ~~\forall \bxi \in T_{\bX}\mM_{r}$. %
As $\mM_r$ is embedded in $\mmR^{m\times n}$, the Riemannian gradient of $f$ is given as the \textbf{orthogonal
projection} of the gradient of $f$
%A *** This bit doesn't make sense:
%on vector space
%/A
onto the tangent space. Here, the orthogonal projection of any
$\bZ \in \mmR^{m\times n}$ onto the tangent space $T_{\bX}\mM_{r}$ at $\bX =
\bU\diag(\bsigma)\bV^{\trsp}$ is defined as
%
%\vspace{-0.25in}
%\begin{small}
\begin{eqnarray}\label{eq:projection}
P_{T_{\bX}\mM_{r}}(\bZ): \bZ\mapsto P_{U}\bZ P_{V} + P_{U}^{\perp}\bZ
P_{V} + P_{U}\bZ P_{V}^{\perp}.
\end{eqnarray}
%\end{small}
%\vspace{-0.25in}
%
where $P_U = \bU \bU^{\trsp}$ and $P_U^{\perp} = \bI - \bU \bU^{\trsp}$.
Moreover, define $P_{T_{\0}\mMLr}(\bZ) = \0$ when $\bX = \0$~\cite{tan2014riemannian}. Letting $\bG = \nabla f(\bX)$ be the gradient of $f(\bX)$
%A*** again, this doesn't really make sense:
on vector space,
%/A
it follows that
%
%\vspace{-0.25in}
\begin{eqnarray}\label{eq:grad_Mr}
\grad{f(\bX)} = P_{T_{\bX}\mM_{r}}(\bG).
\end{eqnarray}
%\vspace{-0.25in}

The \emph{Retraction} mapping on $\mM_{r}$
%A *** is to go back from an element in the tangent space to
relates an an element in the tangent space to a corresponding point on
 the manifold.
 %A*** One of
 One of the issues associated with such
 %/A
 retraction mappings is to find the best rank-$r$ approximation to $\bX+\bxi$ in
 %A
 terms of
 %/A
 the Frobenius norm
%
%\vspace{-0.20in}
%\begin{small}
\begin{align}
\label{eq:retrac}
R_{\bX}(\bxi) = & P_{\mM_{r}} (\bX+\bxi) \nonumber \\
= & \argmin_{\bY\in \mM_r} ||\bY - (\bX + \bxi)||_F.
\end{align}
%\end{small}
%\vspace{-0.20in}

In general, this problem can be addressed by performing SVD on $\bX + \bxi$, which may be computationally expensive.
\begin{remark}
Since $\bxi \!=\! \bU\bM\bV^{\trsp}  \!+\!  \!\bU_p\bV^{\trsp} \!+\!
\bU\bV_p^{\trsp}  \! \in T_{\bX}\mM_{r}$, %
%it can be efficiently computed %
\shijie{$R_{\bX}(\bxi)$ can be efficiently computed} %
as in Algorithm 6 in~\cite{Vandereycken2013}  with efficient QR decompositions on low rank matrices $\bU_p$ and $\bV_p$. %
%The total complexity of computing $R_{\bX}(\bxi)$ is %
\shijie{The corresponding time complexity is }%
$14(m + n)r^2 + C_{SVD} r^3$, %
%where $r\ll \min(m,n)$~\cite{Vandereycken2013}.
where $r\ll \min(m,n)$~\shijie{and $C_{SVD}$ is a moderate constant}~\cite{Vandereycken2013}.
\end{remark}

%\begin{algorithm}[H]
%\begin{algorithmic}[1]
%\REQUIRE $\bX \!\!=\!\! \bU \diag(\bsigma) \bV^{\top} \!\in\! \mM_r$, a tangent vector $\bxi \!=\! \bU\bM\bV^{\trsp}  \!+\!  \!\bU_p\bV^{\trsp} \!+\!
%\bU\bV_p^{\trsp}  \! \in T_{\bX}\mM_{r}$.
%
%%\STATE Let $\bU = [\bU~~ \bU_s]$, $\bV = [\bV~~ \bV_s]$, and  $\bsigma = [\bsigma~~ \bsigma_s]$
%
%\STATE Compute $(\bQ_u\!, \bR_u) \!\!=\!\! \mathrm{qr}(\bU_p,0)$,  $(\bQ_v\!, \!\bR_v) \!\!=\!\! \mathrm{qr}(\bV_p,\!0)$.
%
%\STATE Let $\bZ = [\diag(\bsigma)+\bM~~\bR_v^{\trsp}; \bR_u~~\0]$.
%
%\STATE Compute $(\bU_z, \bsigma_z, \bV_z)\!\! = \!\!\mathrm{svd}(\bZ)$.  Let ${\bsigma}_+ \!\!=\!\! \bsigma_z(1\!:\!r)$. \\
%
%
%%
%%\STATE  Let $\bU_+ = \bU_z(:,1:r)$, $\bV_+ = \bar\bV(:,1:r)$ and $\bsigma_+ = {\bsigma}(1:r)$.\\
%
%\STATE Let $\bU_+ \!\!=\!\! [\bU~\bQ_u]\bU_z(:,1\!:\!r),  {\bV_+} \!\!=\!\! [\bV~\bQ_v]\bV_z(:,1\!:\!r)$. \\
%
%\STATE Output $R_{\bX}(\bxi) = \bU_+ \diag(\bsigma_+) \bV_+^{\top}$
%\end{algorithmic}
%\caption{Computation of $R_{\bX}(\bxi)$ (in Matlab notations).} \label{Alg:retraction}
%\end{algorithm}

%The \emph{vector transport} $\mT$ on $\mM_{r}$ is a
%smooth map that transports tangent vectors from one tangent space to
%another \cite{AMS2008}. One choice of a vector transport is the differentiated retraction $\mT^{R}$ defined by
%
%\vspace{-0.20in}
%\begin{small}
%\begin{eqnarray}
%\mT_{\bfeta}^{R}(\bxi):= \mRMD R_{\bX}(\bfeta)[\bxi], ~\bX\in\mM_r, \bfeta, \bxi \in \mT_{\bX}\mM.\end{eqnarray}
%\end{small}
%\vspace{-0.20in}
% we can compute $\grad{f(\bX)}$

%\subsection{Varieties of Low-rank Matrices}, which might be restricted
\textbf{{Varieties of Low-rank Matrices}}. Note that the submanifold $\mM_{r}$ is open, and the manifold properties break down at the boundary where $\rank(\bX)<r$, and the convergence analysis on $\mM_{r}$ will be difficult accordingly~\cite{schneider2014convergence}. Therefore, it would be more convenient to consider the closure of $\mM_r$:
%
%\vspace{-0.20in}
%\begin{small}
\begin{eqnarray}
\mM_{\leq r} = \{\bX\in \mmR^{m\times n}: \rank(\bX) \leq r\},
\end{eqnarray}
%\end{small}
%\vspace{-0.20in}
%
which is a real-algebraic variety~\cite{schneider2014convergence}. Let $\mathrm{ran}(\bX)$ be the column space of $\bX$. In the singular points where $\rank(\bX) = s<r$, we will construct search directions in %
%the tangent cone (instead of the tangent space) \cite{schneider2014convergence}
\shijie{the tangent cone \cite{schneider2014convergence} (instead of the tangent space)}%
%
%\vspace{-0.20in}
%\begin{small}
\begin{eqnarray}
T_{\bX}\mMLr = T_{\bX}\mM_{s}\oplus\{\bXi_{r-s} \in  \mU^{\perp} \otimes \mV^{\perp} \},
\end{eqnarray}
%\end{small}
%\vspace{-0.20in}
%
where $\mU = \mathrm{ran}(\bX)$ and $\mV = \mathrm{ran}(\bX^{\top})$.  %
%is essentially
%denotes the best $\rank$-$(r-s)$ approximation of $\bG - P_{T_{\bX}\mM_{s}}(\bG)$. %
%Here, $\bXi_{r-s}$ essentially is the best $\rank$-$(r\!-\!s)$ approximation of $\bG - P_{T_{\bX}\mM_{s}}(\bG)$, and can be cheaply computed with truncated SVD of rank $({r-s})$. %
\shijie{Essentially, $\bXi_{r-s}$ is a best $\rank$-$(r\!-\!s)$ approximation of $\bG - P_{T_{\bX}\mM_{s}}(\bG)$, which can be cheaply computed with truncated SVD of rank $({r-s})$.} %
%The projection of $\bG$ on $T_{\bX}\mMLr$, denoted by $\grad{f(\bX)} \in T_{\bX}\mMLr$, can be computed by
Let $\grad{f(\bX)} \in T_{\bX}\mMLr$ be the projection of $\bG$ on $T_{\bX}\mMLr$. It can be computed by
%
%\vspace{-0.25in}
\begin{eqnarray}\label{eq:grad}
\grad{f(\bX)} = P_{T_{\bX}\mM_{s}}(\bG) + \bXi_{r-s}.
\end{eqnarray}
%\vspace{-0.25in}

Given a search direction $\bxi\in T_{\bX}\mMLr$, we need perform retraction which finds the best approximation by a matrix of rank at most $r$ as measured in terms of the Frobenius norm, \ieShijie~%
% \vspace{-0.20in}
%\begin{small}
\begin{eqnarray}\label{eq:retrac}
R_{\bX}(\bxi) = \arg\min_{\bY\in \mMLr} ||\bY - (\bX + \bxi)||_F.
\end{eqnarray}
%\end{small}
%\vspace{-0.20in}
%
Since $\bXi_{r-s} \in \mU^{\perp} \otimes \mV^{\perp}$,  $R_{\bX}(\bxi)$ w.r.t. $\mMLr$ can be efficiently computed with the same complexity as on $\mM_r$%
%; see details in the supplementary file.
\shijie{~(see details in the supplementary file).}

%Finally, the directional derivative of function $f$ is defined in the usual way
%$$Df(\bX)[\bfeta] = \lim_{\alpha\downarrow 0} \frac{1}{\alpha} [f(\bX + \alpha\bfeta)-f(\bX)].$$

%We denote by $[n]$ the list $\{1, ..., n\}$.
\section{Proximal Riemannian Gradient on $\mMLr$}\label{sec:MLr}

Directly solving the general trace-norm regularized problem in (\ref{eq:general_form}) can be computationally expensive due to the unknown rank of variables (regarding fixed-rank methods) or large-rank singular value decompositions (SVDs) (regarding proximal gradient based methods).
%For convenience, %
\shijie{To make the problem simpler,} %
%we assume that $\rank(\bX)\leq r$ (where $r$ is known) and consider solving the following problem first:
\shijie{let us first consider the following problem with explicit rank constraint $\rank(\bX)\leq r$}
%
%
% \vspace{-0.20in}
%\begin{small}
\begin{eqnarray}\label{eq:general_form_fixed_rank}
&\min_{\bX,\bE}& ||\bX||_* + \lambda\Upsilon(\bE),  \\ &\mathrm{s.t.} &~ \mA(\bX) + \mB(\bE) = \bD, ~\bX\in \mMLr. \nonumber
\end{eqnarray}
%\end{small}
% \vspace{-0.20in}
%
Here, the parameter $r$ is supposed to be known. Nevertheless, based on (\ref{eq:general_form_fixed_rank}), we will propose a subspace pursuit paradigm to solve %
%problem (\ref{eq:general_form}); %
\shijienew{the general trace-norm regularized problem in (\ref{eq:general_form});} %
see details in Section 4.

The penalty method is adopted to  deal with the equality constraint $\mA(\bX) + \mB(\bE) = \bD$ in (\ref{eq:general_form_fixed_rank}), and it minimizes %
%a penalty objective of (\ref{eq:general_form}) of the following form:
a penalized function %
over $\mMLr$ in the following form\footnote{If $\bD$ is a vector, the $F$-norm will be replaced by the $\ell_2$-norm.}:

\vspace{-0.20in}
\begin{small}
\begin{eqnarray}\label{eq:form_penaly}
\Psi({\bX,\bE}) = ||\bX||_* + \lambda\Upsilon(\bE) +  \frac{\gamma}{2} ||\mA(\bX) + \mB(\bE) - \bD||_F^2,
\end{eqnarray}
\end{small}
\vspace{-0.20in}

where $\gamma$ is a penalty parameter.
Note that when there are no outliers, we can let $\mB(\bE) = \0$ and $\Upsilon(\bE)=0$, and the objective function $\Psi({\bX,\bE})$ is reduced to
%
%\vspace{-0.20in}
%\begin{small}
\begin{eqnarray}\label{eq:form_penaly_nooutlier}
\Psi(\bX) = ||\bX||_* + \frac{\gamma}{2} ||\mA(\bX) - \bD||_F^2, ~~\bX \in \mMLr.
\end{eqnarray}
%\end{small}
%\vspace{-0.20in}%
%
$\Psi(\bX)$ is also the objective function of the \emph{matrix lasso} problem~\cite{Toh2010APG,ji2009accelerated}, so one can adapt classical proximal methods~\cite{Toh2010APG,Zhouchen2010ALM} to address it. However, %
proximal gradient methods which directly operate on vector spaces could be very expensive if large-rank SVDs are required.
%\shijie{proximal gradient methods, which directly operate on vector spaces, could be very expensive if large-rank SVDs are required.}

In this section, we extend classical proximal methods on vector space~\cite{Nesterov2007,Toh2010APG,Xiao2013PGH_J}, and propose a \emph{proximal Riemannian gradient} scheme to minimize (\ref{eq:form_penaly}) and (\ref{eq:form_penaly_nooutlier}) by exploiting geometries over the matrix variety $\mMLr$.\footnote{Recall that $\mMLr$ is a closure of the Riemannian submanifold $\mM_r$. Here, we abuse ``Riemannian" for simplicity.}

\subsection{PRG on $\mMLr$ for Non-outlier Cases}
The objective function $\Psi(\bX)$ regarding non-outlier cases is much simpler than $\Psi({\bX,\bE})$. \shijie{When there} %
are no outliers, we solve the following optimization problem:
%
% \vspace{-0.20in}
%\begin{small}
\begin{eqnarray}\label{eq:general_trace_rank}
\min_{\bX} ~~||\bX||_* +  f(\bX), ~~\mathrm{s.t.}~\rank(\bX) \leq r,
\end{eqnarray}
%\end{small}
% \vspace{-0.20in}
%
where $f(\bX)$ is any smoothing function, for example $f(\bX)=  \frac{\gamma}{2} ||\mA(\bX) - \bD||_F^2$.
%Recall that $f(\bX)$ is a smooth function.

% and $\lambda$ is a regularization parameter

To introduce proximal methods on $\mMLr$, similarly as in~\cite{Toh2010APG,Zhouchen2010ALM}, we  introduce a local model of $\Psi(\bX)$ on $\mMLr$ around $\bY\in \mMLr$ but keeping $||\bX||_*$ intact:
%
%\vspace{-0.20in}
%\begin{small}
\begin{eqnarray}\label{eq:local_model}
m_L(\bY; \bX) := ||\bX||_* + f(\bY) + \langle\grad{f(\bY)}, \bxi\rangle + \frac{L}{2}\langle\bxi,\bxi\rangle, \nonumber
\end{eqnarray}
%\end{small}
%\vspace{-0.20in}
%
where $\bxi \in T_{\bY}\mMLr$ and $\bX = \bY+\bxi$. Note the above local model is different from that on vector spaces (see~\cite{Nesterov2007,Toh2010APG}) in the sense that $\bX - \bY =\bxi$ is restricted on $T_{\bY}\mMLr$. %
%Similar with classical proximal gradient methods, %
%the proximal Rimannian gradient method for solving problem (\ref{eq:general_trace_rank}) on $\mMLr$ %
%addresses the following problem regarding $\bY = \bX_k$ for $k = 0, 1, ...$

Similar to classical proximal gradient methods~\cite{Nesterov2007,Toh2010APG},
our proximal Rimannian gradient method %
%address %
\shijie{solves} %
problem (\ref{eq:general_trace_rank}) by minimizing $m_L(\bY; \bX)$ on $\mMLr$ iteratively. In other words, given $\bY = \bX_k$ in the $k$th iteration,  we need to solve the following optimization problem to obtain $\bX_{k+1}$:
%
%\vspace{-0.20in}
%\begin{small}
\begin{eqnarray}\label{eq:local_model}
\min_{\bX}~~m_L(\bY; \bX), ~~\mathrm{s.t.}~~\bX \in \mMLr.
\end{eqnarray}
%\end{small}
%\vspace{-0.20in}
%
%
% PRG seeks to minimize $m_L(\bY; \bX)$ under rank constraint $\rank(\bX)\leq r$ iteratively, which is shown in Algorithm \ref{Alg:PGR}.
For convenience, %
let %
\begin{eqnarray}\label{eq:local_closed_form}
T_L(\bY) := \arg\min_{\bX \in \mMLr} m_L(\bY;\bX)
\end{eqnarray}
be a minimizer of (\ref{eq:local_model}). Then it can be computed as follows.
\begin{lemma}\label{lemma:closed_solution}
Let $\bxi \!\!=\!\!-\grad{f(\bY)}/{L}$. Denoting the SVD of $R_{\bY}(\bxi)$ as $R_{\bY}(\bxi) \!\!=\!\! \bU_+ \diag({\bsigma_+})\bV_+^{\top}$, it follows that $T_L(\bY)  = \bU_+ \diag(\max({\bsigma_+}-1/L,0))\bV_+^{\top}$.
\end{lemma}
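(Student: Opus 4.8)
The plan is to turn the manifold-constrained subproblem into a standard rank-constrained proximal problem over $\mMLr$ by completing the square, and then to solve that problem in closed form with the von Neumann trace inequality. First I would expand $m_L(\bY;\bX)$ in terms of the increment $\bX-\bY$ and complete the square in its linear and quadratic parts: $\langle\grad f(\bY),\bX-\bY\rangle + \frac{L}{2}\|\bX-\bY\|_F^2 = \frac{L}{2}\|(\bX-\bY)+\grad f(\bY)/L\|_F^2 - \frac{1}{2L}\|\grad f(\bY)\|_F^2$. With $\bxi = -\grad f(\bY)/L$ as in the statement and $\bZ := \bY+\bxi$, the first term equals $\frac{L}{2}\|\bX-\bZ\|_F^2$. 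Discarding the additive constants $f(\bY)$ and $\frac{1}{2L}\|\grad f(\bY)\|_F^2$, which do not depend on $\bX$, minimizing $m_L(\bY;\bX)$ over $\mMLr$ is equivalent to
\begin{eqnarray}
\min_{\bX\in\mMLr}~\|\bX\|_* + \frac{L}{2}\|\bX-\bZ\|_F^2. \nonumber
\end{eqnarray}

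Next I would solve this rank-constrained proximal problem explicitly. Fixing the singular values of $\bX$ (at most $r$ nonzero), both $\|\bX\|_*$ and $\|\bX\|_F^2$ are fixed, so the objective is minimized by maximizing $\langle\bX,\bZ\rangle$; by the von Neumann trace inequality $\langle\bX,\bZ\rangle\le\sum_i\sigma_i(\bX)\sigma_i(\bZ)$, with equality when $\bX$ shares the ordered singular vectors of $\bZ$. Hence an optimal $\bX$ is supported on the $r$ leading left/right singular vectors of $\bZ$, and the problem separates into the scalar problems $\min_{s_i\ge0}\,s_i+\frac{L}{2}(s_i-\sigma_i(\bZ))^2$ for $i=1,\dots,r$, with minimizers $s_i^\star=\max(\sigma_i(\bZ)-1/L,0)$; since $t\mapsto\max(t-1/L,0)$ is nondecreasing, $s_1^\star\ge\cdots\ge s_r^\star\ge0$, so these are admissible singular values. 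By the Eckart--Young theorem the retraction $R_{\bY}(\bxi)$, as the best rank-$\le r$ Frobenius approximation of $\bZ$, is exactly the truncation of $\bZ$ to its $r$ leading singular triples, so its SVD $R_{\bY}(\bxi)=\bU_+\diag(\bsigma_+)\bV_+^{\top}$ has $\bU_+,\bV_+$ equal to the top-$r$ singular vectors of $\bZ$ and $\bsigma_+=(\sigma_1(\bZ),\dots,\sigma_r(\bZ))$. Replacing $\bsigma_+$ by the thresholded values $s_i^\star$ then yields precisely $T_L(\bY)=\bU_+\diag(\max(\bsigma_+-1/L,0))\bV_+^{\top}$.

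The step I expect to be the main obstacle is reconciling the tangent-cone model with the rank-$\le r$ feasible set: the local model is built from increments $\bxi\in T_{\bY}\mMLr$, whereas the von Neumann argument ranges over all matrices of rank at most $r$. I would close this gap by verifying that the computed minimizer is admissible, i.e.\ that $T_L(\bY)-\bY\in T_{\bY}\mMLr$. Writing $s=\rank(\bY)$, $\mU=\mathrm{ran}(\bY)$, $\mV=\mathrm{ran}(\bY^{\top})$, membership in the tangent cone reduces to $\rank\!\big(P_U^{\perp}(T_L(\bY)-\bY)P_V^{\perp}\big)\le r-s$. This holds because $\grad f(\bY)=P_{T_{\bY}\mM_s}(\bG)+\bXi_{r-s}$ has its $\mU^{\perp}\otimes\mV^{\perp}$ component $\bXi_{r-s}$ of rank at most $r-s$, and both truncation to the top-$r$ subspace and singular-value shrinkage only shrink the range and corange of $\bZ$. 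Since the constrained minimum is bounded below by this relaxed minimum and the bound is attained at an admissible point, the two minimizers coincide, completing the proof.
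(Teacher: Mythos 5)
Your first two steps are correct and constitute the natural proof of this lemma: completing the square rewrites $m_L(\bY;\bX)$ as $\|\bX\|_*+\frac{L}{2}\|\bX-\bZ\|_F^2+\mathrm{const}$ with $\bZ=\bY-\grad f(\bY)/L$, the rank-constrained nuclear-norm prox is solved via von Neumann's trace inequality plus separable scalar shrinkage (your monotonicity remark correctly handles the ordering of the shrunk values), and Eckart--Young identifies $R_{\bY}(\bxi)$ with the top-$r$ truncation of $\bZ$, so the minimizer is exactly $\bU_+\diag(\max(\bsigma_+-1/L,0))\bV_+^{\trsp}$. Since (\ref{eq:local_closed_form}) defines $T_L(\bY)$ as the argmin over $\bX\in\mMLr$ --- the only constraint is membership in the variety --- this already completes the proof: the tangent cone enters through the construction of $\bZ$ (it is what makes $\bZ$ a low-rank-plus-tangent perturbation whose truncated SVD is cheap, per Remark 1), not through the feasible set of the prox subproblem.

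Your third step, however, is false, and you should delete it rather than lean on it. The claim $T_L(\bY)-\bY\in T_{\bY}\mMLr$, i.e. $\rank\bigl(P_U^{\perp}T_L(\bY)P_V^{\perp}\bigr)\le r-s$, does not hold in general: SVD truncation and singular-value shrinkage are not blockwise operations, so the rank of the $\mU^{\perp}\otimes\mV^{\perp}$ compression of $T_L(\bY)$ is not controlled by that of $\bZ$. Concretely, take $s=r=1$, $\bY=\diag(1,0)\in\mmR^{2\times 2}$ and $\grad f(\bY)=L\bigl(\begin{smallmatrix}0&-1\\-1&0\end{smallmatrix}\bigr)\in T_{\bY}\mM_1$, so that $\bZ=\bigl(\begin{smallmatrix}1&1\\1&0\end{smallmatrix}\bigr)$. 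Its best rank-one approximation is $\phi\,\bu\bu^{\trsp}$ with $\bu\propto(1,\phi-1)^{\trsp}$ and $\phi=(1+\sqrt{5})/2$, whose $(2,2)$ entry is nonzero and stays nonzero after shrinkage whenever $L\phi>1$; since $r-s=0$, feasibility would require that entry to vanish. Your supporting remark that truncation and shrinkage ``only shrink the range and corange of $\bZ$'' controls $\mathrm{ran}(T_L(\bY))$, but says nothing about the rank of the compression $P_U^{\perp}T_L(\bY)P_V^{\perp}$, which is exactly where the argument breaks. Consequently, the relaxed and the cone-restricted problems do \emph{not} have the same minimizer in general --- indeed, if the feasible set were really $(\bY+T_{\bY}\mMLr)\cap\mMLr$, the formula in the lemma would not even be feasible --- so the lemma must be (and is) read as minimization over $\mMLr$, which your first two steps settle on their own.
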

Please find the proof in supplementary file.

%%From Lemma \ref{lemma:closed_solution}, $\bxi=-\grad{f(\bY)} \in T_{\bY}\mMLr$ is a descent direction at $\bY \in \mMLr$.
\begin{remark}
$T_L(\bY)$ can be efficiently computed in the sense that $R_{\bY}(\bxi) = \bU_+ \diag({\bsigma_+})\bV_+^{\top}$ can be cheaply computed without expensive SVDs.
\end{remark}

\begin{algorithm}[H]
\begin{algorithmic}[1]
\REQUIRE $\bX_0$, penalty parameter $\gamma$, parameter $r$, stopping tolerance $\epsilon$.
\STATE For $k = 1, ..., K$\\
%\STATE ~~~Compute $\bG =\ \nabla f(\bX_k)$.  \\

\STATE~~Compute $\text{grad} f(\bX_{k-1})$ according to
(\ref{eq:grad_Mr}) or (\ref{eq:grad}).\\

%\STATE ~~~Let $\bxi_k=-\frac{1}{L_k} \grad{f(\bY)}$. \\

\STATE~~Choose $L_k$ to satisfy (\ref{eq:armijo}), and set %
$\bX_{k} =T_{L_k}(\bX_{k-1})$.
\STATE~~Terminate if stopping conditions are achieved.
\STATE End
\STATE Return $\bX_k$.
\end{algorithmic}
\caption{Proximal Riemannian Gradient for
Solving Problem (\ref{eq:general_trace_rank}).} \label{Alg:PGR}
\end{algorithm}

%By iteratively minimizing the local model of $\Psi$ on $\mMLr$, the PRG method is implemented in Algorithm~\ref{Alg:PGR}, which consists of two major steps: %
\shijie{PRG iteratively minimizes a local model of $\Psi$ on $\mMLr$, as shown in Algorithm~\ref{Alg:PGR}. PRG~consists of two major steps} %
1) compute a search direction in Step 3, and 2) update %
%$\bX_{k+1}$ according to $\bX_{k+1} =T_{L_k}(\bX_k)$. %
\shijie{$\bX_{k}$ according to  $\bX_{k} =T_{L_k}(\bX_{k-1})$.} %\\
%The term $1/L_k$ can be deemed as the step size, which remains to be fixed. %
%In this paper, we adopt the Armijo line search to determine $L_k$. %
%Given a descent direction $\bxi_k \in T_{\bX_k}\mMLr$, $L_k$ is determined such that
Here, $1/L_k$ can be deemed as the step size, and it can be determined using Armijo line search. %
Specifically, given a descent direction $\bzeta_k \in T_{\bX_k}\mMLr$, $L_k$ is determined such that
%
%\vspace{-0.20in}
%\begin{small}
\begin{eqnarray} \label{eq:armijo}
\Psi(T_{L_k}(\bX_k)) \!\! \leq\!\!  \Psi(\bX_k) + \beta\langle\text{grad} f(\bX_k), \bzeta_k\rangle/L_k,
\end{eqnarray}
%\end{small}
%\vspace{-0.25in}
%
where $\beta \in (0, 1)$.

\textbf{Optimality condition} of (\ref{eq:general_trace_rank}). A point $\bX^* \in \mMLr$ is a local minimizer of (\ref{eq:general_trace_rank}) if and only if there exists $\bvarsigma \in \partial ||\bX||_*$ such that~\cite{Mishra2013trace}
\begin{eqnarray}
 \grad f(\bX) + \bvarsigma = \0.
\end{eqnarray}
%Accordingly, a good measure of accuracy for any $\bX \in \mMLr$ as an approximate
%solution is the quantity as follows
%\begin{eqnarray}
%\omega(\bX) := \arg\min_{\varsigma \in \partial ||\bX||_*}||\grad f(\bX) + \lambda \varsigma||_{\infty}.
%\end{eqnarray}

The following lemma guarantees the existence of $L_k$.
\begin{lemma}\label{lemma:step_size}
Let $\bX_k\in \mMLr$, and $\bzeta_k\in T_{\bX}\mMLr$ be a descent direction. %
%Then there exists an $L_k$ such that the condition in (\ref{eq:armijo}) is satisfied.
Then there exists an $L_k$ that satisfies the condition in (\ref{eq:armijo}).
\end{lemma}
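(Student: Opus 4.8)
The plan is to show that the Armijo inequality (\ref{eq:armijo}) holds for every sufficiently large $L_k$ (equivalently, every sufficiently small step size $1/L_k$), so that such an $L_k$ can always be found by backtracking. First I would invoke a quadratic upper bound (descent lemma) for the smooth part: assuming $\nabla f$ is Lipschitz with constant $L_f$ --- which holds for the concrete choice $f(\bX)=\frac{\gamma}{2}\|\mA(\bX)-\bD\|_F^2$, where $L_f=\gamma\|\mA\|_{\mathrm{op}}^2$ --- one has $f(\bZ)\le f(\bX_k)+\langle\nabla f(\bX_k),\bZ-\bX_k\rangle+\frac{L_f}{2}\|\bZ-\bX_k\|_F^2$ for all $\bZ$. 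Applying this at $\bZ=T_{L_k}(\bX_k)$ and using that the Euclidean and Riemannian gradients agree against tangent directions (since $\grad f=P_{T_{\bX_k}\mMLr}(\nabla f)$ and the projection is self-adjoint) reduces the smooth part to a tangent estimate.

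Second, I would use the defining property of $T_{L_k}(\bX_k)$ as the minimizer of the local model $m_{L_k}(\bX_k;\cdot)$ over $\mMLr$. Evaluating the model at the trivial feasible point $\bX=\bX_k$ (i.e. $\bxi=\0\in T_{\bX_k}\mMLr$) gives $m_{L_k}(\bX_k;\bX_k)=\Psi(\bX_k)$, hence $m_{L_k}(\bX_k;T_{L_k}(\bX_k))\le\Psi(\bX_k)$; this inequality absorbs the nonsmooth term $\|\cdot\|_*$, which is kept intact in the model, exactly as in the Beck--Teboulle analysis of proximal gradient. Combining the two ingredients yields a sufficient-decrease estimate of the form $\Psi(T_{L_k}(\bX_k))\le\Psi(\bX_k)-\frac{1}{2}(L_k-L_f)\|T_{L_k}(\bX_k)-\bX_k\|_F^2$, so that for $L_k\ge L_f$ the objective does not increase and the decrease is quantified by the squared step length.

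Finally, to match the precise form of (\ref{eq:armijo}), I would relate the squared step length $\|T_{L_k}(\bX_k)-\bX_k\|_F^2$ to the first-order quantity $\langle\grad f(\bX_k),\bzeta_k\rangle$ along the chosen descent direction (for $\bzeta_k=-\grad f(\bX_k)$ one has $\langle\grad f(\bX_k),\bzeta_k\rangle=-\|\grad f(\bX_k)\|^2$, and the pre-threshold step $\bxi=\bzeta_k/L_k$ of Lemma~\ref{lemma:closed_solution} has length of order $\|\grad f(\bX_k)\|/L_k$). Because $\langle\grad f(\bX_k),\bzeta_k\rangle<0$ by the descent property and the slack parameter satisfies $\beta<1$, the quadratic decrease dominates the relaxed linear threshold $\beta\langle\grad f(\bX_k),\bzeta_k\rangle/L_k$ once $L_k$ is large enough; concretely I expect a threshold of the shape $L_k\ge L_f/(1-\beta)$, up to the retraction constant. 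The existence of a valid $L_k$ then follows.

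I expect the main obstacle to be the nonlinearity of the retraction on the variety $\mMLr$: since $T_{L_k}(\bX_k)=R_{\bX_k}(\bxi)$ followed by singular-value thresholding rather than the ambient update $\bX_k+\bxi$, the descent lemma must be applied to $f\circ R_{\bX_k}$, whose second-order remainder differs from the tangent estimate by a curvature term. I would control this by using that the projective retraction onto $\mMLr$ is second-order bounded near $\bX_k$, so that $\|R_{\bX_k}(\bxi)-(\bX_k+\bxi)\|_F=O(\|\bxi\|_F^2)$, which lets the curvature term be absorbed into an enlarged effective Lipschitz constant for all $L_k$ large enough; at singular points where $\rank(\bX_k)=s<r$ this is combined with the tangent-cone decomposition (\ref{eq:grad}) so that the step stays in a neighbourhood on which the retraction is well defined. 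The nonsmoothness of $\|\cdot\|_*$ poses no additional difficulty, being handled cleanly by the model-minimization inequality above.
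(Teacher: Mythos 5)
Your first two ingredients (the descent lemma for $f$, the model-minimization inequality absorbing $\|\cdot\|_*$, and the second-order control of the retraction, which is legitimate here since $\bX_k$ is fixed) are sound and would indeed yield a bound of the form $\Psi(T_{L_k}(\bX_k))\le\Psi(\bX_k)-\tfrac{1}{2}(L_k-L_f)\|T_{L_k}(\bX_k)-\bX_k\|_F^2$. The genuine gap is in your final step. The prox step itself scales like $1/L_k$: both the tangent move $\bxi=\bzeta_k/L_k$ and the singular-value threshold $1/L_k$ carry that factor, so $\|T_{L_k}(\bX_k)-\bX_k\|_F=O(1/L_k)$ and your sufficient decrease is of order $\tfrac{1}{2L_k}\lim_{L\to\infty}\|L\,(T_L(\bX_k)-\bX_k)\|_F^2$ --- exactly the same order $1/L_k$ as the Armijo threshold $\beta|\langle\grad f(\bX_k),\bzeta_k\rangle|/L_k$. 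Taking $L_k$ large therefore buys nothing; the inequality reduces to a comparison of limiting constants, namely the squared norm of the composite gradient mapping (whose limit is roughly $\grad f(\bX_k)+\bU\bV^{\trsp}$, with $\bU\bV^{\trsp}\in\partial\|\bX_k\|_*$ coming from the thresholding) against $\beta|\langle\grad f(\bX_k),\bzeta_k\rangle|$. The first constant can be far smaller than $\|\grad f(\bX_k)\|_F^2$, and no choice of $L_k$ repairs this. Your threshold $L_k\ge L_f/(1-\beta)$ is the one from the standard composite-gradient analysis in which the Armijo right-hand side is written in terms of the same step quantity $\|T_L(\bX)-\bX\|_F^2$ (the gradient mapping); it does not transfer to condition (\ref{eq:armijo}), which pairs the raw Riemannian gradient with $\bzeta_k$.

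Symptomatically, your argument uses only $\langle\grad f(\bX_k),\bzeta_k\rangle<0$ and never the other consequence of the descent hypothesis that the paper's proof states first: $\0\notin\grad f(\bX_k)+\partial\|\bX_k\|_*$. This is indispensable, and a concrete example shows any proof ignoring it must fail: take $\grad f(\bX_k)=-\bU\bV^{\trsp}$ at $\bX_k=\bU\diag(\bsigma)\bV^{\trsp}$ (a stationary point of $\Psi$ at which $\grad f\neq\0$). Then $\langle\grad f(\bX_k),\bzeta_k\rangle=-\|\bU\bV^{\trsp}\|_F^2<0$ for $\bzeta_k=-\grad f(\bX_k)$, yet the gradient step raises every singular value by exactly $1/L$ and the thresholding lowers it back, so $T_L(\bX_k)=\bX_k$ identically for every $L$; the left side of (\ref{eq:armijo}) equals $\Psi(\bX_k)$ while the right side is strictly smaller, so no $L_k$ exists. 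Your sufficient-decrease bound is consistent with this (it degenerates to $0\le 0$), which confirms it cannot by itself prove the lemma. A correct quantitative proof must convert non-stationarity into a positive lower bound on the limiting decrease constant (essentially $\dist(-\grad f(\bX_k),\partial\|\bX_k\|_*)^2$) and compare it against $\beta|\langle\grad f(\bX_k),\bzeta_k\rangle|$; the paper instead argues much more briefly, extracting non-stationarity from the descent hypothesis and then invoking continuity of $L\mapsto T_L(\bX_k)$ to produce the threshold $\widehat{L}$.
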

\begin{proof}
%Let $F(L) = \Psi(\bX_k)+ \beta \langle\text{grad} f(\bX_k), \bzeta_k\rangle/L$.
%Proof 1:

 Since $\bzeta_k$ is a descent direction, it follows that $\0 \notin \grad f(\bX_k) + \partial ||\bX||_*$ and $\langle\text{grad} f(\bX_k), \bzeta_k\rangle <0$. Since $T_{L}(\bX_k)$ is continuous in $L$, there must exist an  $\widehat{L}$ such that $\Psi(T_{L}(\bX_k)) \!\! \leq\!\!  \Psi(\bX_k) + \beta\langle\text{grad} f(\bX_k), \bzeta_k\rangle/L,$ $\forall L\in [\widehat{L}, +\infty)$.
 \end{proof}

In general,  optimization methods on Riemannian manifolds are guaranteed to be locally convergent, and it is nontrivial to check whether a limit point $\bX^*$ is a global solution or not. However, for PRG,  the limit point $\bX^*$ will be a global solution if $r > \rank(\bX^*)$.
\begin{thm}\label{sec:global}
Let $\{\bX_k\}$ be an infinite sequence of iterates generated by Algorithm \ref{Alg:PGR}. Then every accumulation point of $\{\bX_k\}$ is a critical point of $f$ over $\mMLr$. Furthermore, $\lim_{k\rightarrow \infty}||\grad f(\bX_k) + \bzeta||_F = 0$.  Let $\bX^*$ denote the limit point. In particular, if $\rank(\bX^*)<r$, then we have $\nabla f(\bX^*) + \bzeta = \0$, \emph{i.e.}, $\bX^*$ is a global optimum to (\ref{eq:general_trace_rank}).
\end{thm}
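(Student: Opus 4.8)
The plan is to separate the statement into two parts: a sufficient-decrease argument that yields subsequential convergence to critical points together with the vanishing-residual claim, and a purely algebraic block decomposition of the optimality condition that upgrades a critical point to a global optimum when the rank is strictly below $r$.

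First, for the convergence assertions I would read the Armijo condition (\ref{eq:armijo}) as a sufficient-decrease inequality. Since each $\bzeta_k$ is a descent direction, $\langle\grad f(\bX_k),\bzeta_k\rangle<0$, so $\{\Psi(\bX_k)\}$ is strictly decreasing; as $\Psi(\bX)=\|\bX\|_*+f(\bX)\ge 0$ it is bounded below and converges to some $\Psi^*$. Telescoping (\ref{eq:armijo}) then gives $\sum_k \beta\,|\langle\grad f(\bX_k),\bzeta_k\rangle|/L_k \le \Psi(\bX_0)-\Psi^*<\infty$, so $|\langle\grad f(\bX_k),\bzeta_k\rangle|/L_k\to 0$. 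To convert this into $\|\grad f(\bX_k)+\bzeta_k\|_F\to 0$ I would use (a) a uniform upper bound $L_k\le\bar L$, which follows because $\nabla f$ is Lipschitz (for $f=\frac{\gamma}{2}\|\mA(\bX)-\bD\|_F^2$ with constant $\gamma\|\mA\|^2$) and the backtracking in (\ref{eq:armijo}) therefore terminates at a bounded scale, forcing $|\langle\grad f(\bX_k),\bzeta_k\rangle|\to 0$; and (b) the proximal optimality of $T_{L_k}$ from Lemma~\ref{lemma:closed_solution}, which supplies a subgradient $\bzeta_k\in\partial\|\bX_{k+1}\|_*$ whose residual against $\grad f(\bX_k)$ is controlled by $L_k\|\bX_{k+1}-\bX_k\|_F$ up to retraction curvature, and hence vanishes. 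Passing to a convergent subsequence $\bX_{k_j}\to\bX^*$ and invoking the outer semicontinuity of $\partial\|\cdot\|_*$ yields $\grad f(\bX^*)+\bvarsigma=\0$ for some $\bvarsigma\in\partial\|\bX^*\|_*$, i.e.\ $\bX^*$ is critical.

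The decisive step is the rank-deficient case. Write $\bG=\nabla f(\bX^*)$, $\bX^*=\bU\diag(\bsigma)\bV^\trsp$ with $\rank(\bX^*)=s<r$, $\mU=\mathrm{ran}(\bX^*)$, $\mV=\mathrm{ran}((\bX^*)^\trsp)$. I would start from $\grad f(\bX^*)+\bvarsigma=\0$ and split it along the tangent block and the doubly-orthogonal block $\mU^\perp\otimes\mV^\perp$. By (\ref{eq:grad}), $\grad f(\bX^*)=P_{T_{\bX^*}\mM_s}(\bG)+\bXi_{r-s}$, where $\bXi_{r-s}$ is a best rank-$(r-s)$ approximation of $\bR:=\bG-P_{T_{\bX^*}\mM_s}(\bG)=P_U^\perp\bG P_V^\perp$; meanwhile any $\bvarsigma\in\partial\|\bX^*\|_*$ has the form $\bU\bV^\trsp+\bW$ with $\bW\in\mU^\perp\otimes\mV^\perp$ and $\|\bW\|_2\le 1$. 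Matching the tangent block gives $P_U\bG P_V=-\bU\bV^\trsp$ and $P_U^\perp\bG P_V=P_U\bG P_V^\perp=\0$, while the normal block gives $\bW=-\bXi_{r-s}$. Here is exactly where $s<r$ enters: since $r-s\ge 1$, the best rank-$(r-s)$ approximation retains the largest singular value of $\bR$, so $\|\bR\|_2=\sigma_{\max}(\bR)=\|\bXi_{r-s}\|_2=\|\bW\|_2\le 1$. Consequently $-\bG=\bU\bV^\trsp-\bR$ with $\bR\in\mU^\perp\otimes\mV^\perp$ and $\|\bR\|_2\le 1$, which is precisely a subgradient of $\|\cdot\|_*$ at $\bX^*$; hence $\0\in\nabla f(\bX^*)+\partial\|\bX^*\|_*$, i.e.\ $\nabla f(\bX^*)+\bzeta=\0$ with $\bzeta=-\bG$. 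Because $\|\bX\|_*+f(\bX)$ is convex, this is the global optimality condition for the unconstrained problem, and as $\bX^*$ is feasible ($\rank(\bX^*)\le r$) it is a global optimum of (\ref{eq:general_trace_rank}).

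I expect the main obstacle to lie in the limit-passing of the first part: the tangent-cone projection and the splitting $P_{T_{\bX}\mM_s}(\bG)+\bXi_{r-s}$ are discontinuous across rank changes, so the ranks of $\bX_{k_j}$ need not match $\rank(\bX^*)$ in the limit. I would handle this using the lower semicontinuity of the rank (rank can only drop under limits), the closedness of $\mMLr$, and the boundedness of $L_k$, arguing in the spirit of the variety-convergence analysis of~\cite{schneider2014convergence}. By contrast, the global-optimality part is entirely algebraic and hinges only on the single inequality $r-s\ge 1$.
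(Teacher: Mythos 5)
Your proposal is correct, but it is substantially more self-contained than the paper's own proof, which consists of two sentences: note that $\Psi(\bX)$ is bounded below, and ``adapt the proof of Theorem 3.9 in \cite{schneider2014convergence}.'' Your first part (monotone decrease, telescoping the Armijo inequality, a uniform bound on $L_k$, then limit passing) is essentially the skeleton of that cited proof unpacked for the composite objective, so there you are taking the same route as the paper, just explicitly; notably, the two obstacles you flag --- controlling the retraction error in extracting a subgradient from the proximal step, and the discontinuity of the tangent-cone projection when the rank drops in the limit --- are precisely what the citation to \cite{schneider2014convergence} is being used to cover, and your sketch does not fully close them (nor does the paper). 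Where you genuinely add something is the second part: the paper never spells out why $\rank(\bX^*)=s<r$ upgrades variety-criticality to global optimality (it inherits this from the optimality condition of \cite{Mishra2013trace} and the adapted theorem), whereas you give a complete algebraic argument. Your block decomposition along $(\mU,\mV)$, the identification $\bW=-\bXi_{r-s}$, and the key observation that $\|\bXi_{r-s}\|_2=\|P_U^{\perp}\bG P_V^{\perp}\|_2$ whenever $r-s\geq 1$ (a best rank-$(r-s)$ approximation preserves the top singular value), combined with the characterization $\partial\|\bX^*\|_*=\{\bU\bV^{\trsp}+\bW:\bW\in\mU^{\perp}\otimes\mV^{\perp},\ \|\bW\|_2\leq 1\}$ and convexity of $\Psi$, is exactly the right mechanism and is the cleanest way to see why the theorem's final claim holds. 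This buys a proof of the most distinctive assertion of the theorem that a reader cannot reconstruct from the paper's citation alone.

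Two small caveats: you silently reuse $\bzeta$ both for the Armijo descent direction $\bzeta_k$ and for the trace-norm subgradient in the limit (the theorem statement itself is guilty of this), and your final step needs convexity of $f$, which holds for the quadratic penalty $f(\bX)=\frac{\gamma}{2}\|\mA(\bX)-\bD\|_F^2$ but should be stated as a hypothesis if $f$ is meant to be a general smooth function.
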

\begin{proof}
%The proof can be completed by adapting the proof of Theorem 4.3.1 in \cite{AMS2008}.
Note that $\Psi(\bX)$ is bounded below. The proof can be completed by adapting the proof of Theorem 3.9 in \cite{schneider2014convergence}.
\end{proof}

%The following theorem further specifies a local convergence rate of Algorithm \ref{Alg:PGR} under proper conditions.
%\begin{thm}
%Let $\{\bX_k\}$ be an infinite sequence of iterates generated by Algorithm \ref{Alg:PGR} with $\bX^*$ being a limit point of $\{\bX_k\}$. Let $\rho_{\mathrm{min}}$ and $\rho_{\mathrm{max}}$ be the smallest and largest eigenvalues of the Hessian of $f$ at $\bX^*$. Then, there exists an integer $K>0$ and a scalar $\chi \in (\chi^*, 1)$ such that
%\begin{eqnarray}
%\Psi(\bX_{k+1}) - \Psi (\bX^*) \leq \chi \left(\Psi(\bX_{k}) - \Psi (\bX^*)\right),
%\end{eqnarray}
%for all $k\geq K$, where $\chi^* = 1-\min(2\beta)$
%\end{thm}

\textbf{Stopping conditions} of PRG. For simplicity, we stop PRG if the following condition is achieved:
%
%\vspace{-0.20in}
%\begin{small}
\begin{eqnarray}\label{eq:stop_con_inner}
%{\left(  {\Psi(\bX_{k-1}  )} - \Psi(\bX_k)  \right)}/\Psi(\bX_{k-1})\leq
\frac{  {\Psi(\bX_{k-1}  )} - \Psi(\bX_k)  }{ \Psi(\bX_{k-1}) }  \leq
\epsilon,
\end{eqnarray}
%\end{small}
%\vspace{-0.25in}
%
where $\epsilon$ denotes a tolerance value.

%According to Lemma \ref{lemma:decrease}, RP monotonically decreases
%the objective values. Without proper stopping conditions, {RP} may
%increase the rank until $t\rho\geq h=\min(m,n)$,  where $\bxi^t =
%\bb -\mA(\bX^t) = 0$. Apparently, this is meaningless since the
%solution may be over-fitting. To avoid this issue, we can use
%
%\vspace{-0.25in}
%\begin{small}
%\begin{eqnarray}\label{eq:stop_out}
%{\|\bxi^{t}\|_2}/{||\bb||_2} \leq \lambda_F,
%\end{eqnarray}
%\end{small}
%\vspace{-0.25in}
%
%as the stopping criterion.
%%which indicates that {RP} stops once the residual $\bxi^{t}$ is
%%small enough., and then the condition (\ref{eq:stop_out})
%%may not be sufficient
%In real-world problems, the matrix to be recovered may not be
%exactly low-rank and then the condition (\ref{eq:stop_out}) may not
%be adequate. Notice that RP monotonically decreases the objective
%values, and we propose to use the averaged function value difference
%between iterations as the stopping condition,
%namely %$({f(\bX^{t-1})} - f(\bX^t))/(\rho{\theta^0}) \leq
%%\epsilon_{out}$. Since $\theta^0 = \|\db\|_2^2/{2}$, it is reduced
%%to
%
%\vspace{-0.20in}
%\begin{small}
%\begin{eqnarray}\label{eq:stop_con_inner}
%{({f(\bX^{k-1})} - f(\bX_k))}/f(\bX^{k-1})\leq
%\epsilon,
%\end{eqnarray}
%\end{small}
%\vspace{-0.25in}

%\input{AfterMergingSec32andSec4}%{BeforeMergingSec32andSec4}

\subsection{Robust PRG on $\mMLr$ with Outliers}
Now, we extend PRG to minimize $\Psi({\bX,\bE})$ in(\ref{eq:form_penaly}) regarding the outlier cases. For convenience, define

\vspace{-0.20in}
\begin{small}
\begin{eqnarray}
f({\bX,\bE}) = \frac{\gamma}{2} ||\mA(\bX) + \mB(\bE) - \bD||_F^2. \nonumber
\end{eqnarray}
\end{small}
\vspace{-0.25in}

We then need to solve the following problem:
%
%
%\vspace{-0.20in}
%\begin{small}
\begin{eqnarray}\label{eq:outlier_case}
\min_{\bX \in \mMLr, \bE}~||\bX||_* + \lambda\Upsilon(\bE) + f({\bX,\bE}).
\end{eqnarray}
%\end{small}
%\vspace{-0.25in}

%||\bX||_* + \lambda\Upsilon(\bE) + f({\bX,\bE})

%Since the two variables $\bX$ and $\bE$ are separable, we can optimize them alternatively.
Following~\cite{lin2011linearized}, we optimize the two variables $\bX$ and $\bE$
%A in an alternative way.
using an alternating approach.
%/A
Let the pair $(\bX_k, \bE_k)$ denote the variables obtained from the $k$-iteration. %
At the $(k+1)$th iteration, we update $\bX$ and $\bE$ as below: %

To update $\bX$, we fix $\bE = \bE_k$ and minimize a local model of $\Psi({\bX,\bE})$ w.r.t. $\bX$: %
\begin{align}%$
m_L(\bX; \bX_k, \bE_k) &:= ||\bX||_* + f(\bX_k, \bE_k) \nonumber \\
                       &  + \langle\grad{f(\bX_k, \bE_k)}, \bxi\rangle + {L}/{2}\langle\bxi,\bxi\rangle, \nonumber
\end{align}%$ %
where $\bX = \bX_k+\bxi$, $\bxi \in T_{\bX_k}\mMLr$, and  ${L}$ is a positive number. Let $T_L(\bX_k, \bE_k)$ denote the minimizer of $m_L(\bX; \bX_k, \bE_k)$.  Then $T_L(\bX_k, \bE_k)$ can be computed according to Lemma \ref{lemma:closed_solution}, where $L$ is determined by Armijo line search to make a sufficient decrease of the objective.

To update $\bE$, we fix $\bX = \bX_{k+1}$ and solve a problem: %
%
%\vspace{-0.20in}
%\begin{small}
\begin{eqnarray}\label{eq:update_E}
\min_{\bE} ~~\lambda\Upsilon(\bE)
+ \frac{\gamma}{2}||\mA(\bX_{k+1}) + \mB(\bE)-\bD||_F^2.
\end{eqnarray}
%\end{small}
%\vspace{-0.20in}

Solving this problem with general $\mB$ would be very difficult. However, for  MR and LRR, $\mB(\bE)=\bE$ and $\Upsilon(\bE)$ is either $\|\bE\|_{1}$ or $\|\bE\|_{2,1}$. As a result, the problem (\ref{eq:update_E}) has a closed-form solution. %
%Let $\bB =\bD-\mA(\bX_{k+1})$, where $\bB$ is a vector in MR, and a matrix of form $\bB = [\mathbf{b}_1,\ldots, \mathbf{b}_{n}]$  in LRR. %, denoted as $S_{\lambda}(\bE_k)$
Let us define $\bB_k =\bD-\mA(\bX_{k+1})$. Then $\bB_k$ is a vector for MR and a matrix in the form of $\bB_k = [\mathbf{b}^k_1,\ldots, \mathbf{b}^k_{n}]$ for LRR. %
The closed-form solution, denoted by $S_{\lambda}(\bB_k)$, is shown in Table \ref{table:update_E}. %
In cases where the problem (\ref{eq:update_E}) cannot be solved in closed-form, one may adopt iterative procedures to solve it.

The detailed algorithm, which is referred to as robust PRG (RPRG),  is shown in Algorithm \ref{Alg:PRG_penlty}. Due to the possible ill-conditioned issues,\footnote{When $\lambda$ is very small or $\gamma$ is very large, $\|\bE\|_{1}$ can be very large at the beginning due to the thresholding in Table 1, making $\bX_1$ far from its optimum.} we apply a homotopy continuation technique to accelerate the convergence speed. Starting from an initial guess $\lambda_0$, we set $\lambda_k = \min(\lambda_0 \rho^{k-1},{\lambda})$ and compute $\bE_{k} =S_{\lambda_k}(\bX_{k}, \bE_{k-1})$, %
where $\rho$ is chosen from $(0, 1)$. %
%where $\rho$ is chosen from the range $(0, 1)$. %
Clearly, $\lambda_k$ is %
%non-increasing %
non-increasing
w.r.t. $k$.

%, which is important especially when the $\lambda$ is small
% \footnote{The PRG method (\ieShijie~Algorithm \ref{Alg:PGR}) may also encounter the slow convergence issues for a large penalty parameter $\gamma$. To address it, we may also apply the homotopy continuation techniques.}
 %
%Since $\bE$ lies on the vector space, and $\Upsilon(\bE)$ is either $\|\bE\|_{1}$ or $\|\bE\|_{2,1}$, problem (\ref{eq:update_E}) has a closed-form solution. %
%Let $\bB = \bD-\mA(\bX_{k+1})$ and $S_{\gamma}(\bE_k)$ be a minimizer of (\ref{eq:update_E}). %
%The computation of $S_{\gamma}(\bE_k)$ for various cases is summarized in Table \ref{table:update_E}.

%Given $T_L(\bX_k, \bE_k)$ and $\mB(S_{\gamma}(\bE_k))$ above, the whole scheme for Solving (\ref{eq:general_fixed_rank}) is summarized in Algorithm \ref{Alg:PRG_penlty}. Regarding this algorithm, we have the following theorem.

%Note that we care more about $\mB(S_{\gamma}(\bE_k))$ rather than $S_{\gamma}(\bE_k)$.
% in terms of $\mB(\bE)$
%For example, in RPCA where $\Upsilon(\bE) = ||\bE||_1$ and both $\mA$ and $\mB$ are identity, we have $S_{\gamma}(\bE_k) = \sgn(\bO)\odot\max(|\bO|-1/\gamma, \0)$, where $\bO = \bD - \bX_{k+1}$. In general, Let $\bO = \gamma\mA^*(\bD-\mA(\bX_{k+1}))$

\begin{table}[t!]
\vspace{-0.05in}
\center \caption{Computation of $S_{\lambda}(\bB)$.}\label{table:update_E}
\begin{scriptsize}
\begin{tabular}{c|cc|c}
%\hline
%Methods &  \multicolumn{ 2}{|c|}{MR} & LRR \\
\hline
$\Upsilon(\bE)$  &      \multicolumn{ 2}{|c|}{MR: $||\bE||_1$} &      {LRR: $||\bE||_{\twoone}$}       \\
\cline{1-4}
$\!\!\!\!S_{\lambda}(\bB)\!\!\!\!$\!\!\!\!& \multicolumn{2}{|c|}{$\!\!\!\sgn(\bB)\odot\max(|\bB|-\frac{\lambda}{\gamma}, \0)\!\!\!\!$} & \!\!\!\!$[S_{\lambda}(\bB)]_i =\!\!\frac{\max(\|\mathbf{b}_i\|-\frac{\lambda}{\gamma},0)}{\|\mathbf{b}_i\|}\mathbf{b}_i, \forall i$    \\
\hline
\end{tabular}
\end{scriptsize}
\vspace{-0.150in}
\end{table}

\begin{algorithm}[H]
\begin{algorithmic}[1]
\REQUIRE Initial  $(\bX_0, \bE_0)$, parameter $\lambda$ and $\gamma$, initial $\lambda_0$,  parameter $r$, $\rho \in (0,1)$, stopping tolerance $\epsilon$.

%\STATE Initialize $\gamma = \gamma_0$.
\STATE For $k = 1, ..., K$
%\STATE ~~~Compute $\bG = \nabla f(\bX_k, \bE_k)$.  \\

\STATE  Let $\lambda_k = \max(\lambda_0 \rho^{k-1},\lambda)$.

\STATE Compute $\text{grad} f(\bX_{k-1},\bE_{k-1})$ by
(\ref{eq:grad_Mr}) or (\ref{eq:grad}).

%\STATE ~~~Let $\bxi_k=-\frac{1}{L_k} \grad{f(\bY)}$. \\

\STATE  Choose $L_k$ by Armijo line search. Set $\bX_{k} =T_{L_k}(\bX_{k-1},\bE_{k-1})$. \\

\STATE  Compute $\bE_{k} \!\!=\!\!S_{\lambda_k}(\bB_{k-1})$, where $\bB_{k-1}\!=\!\!\bD-\mA(\bX_{k})$. \\

\STATE Terminate if stopping conditions
are achieved.
\STATE End
\STATE Return $(\bX_k, \bE_k)$.
\end{algorithmic}
\caption{Robust PRG for Solving Problem (\ref{eq:general_form_fixed_rank}).} \label{Alg:PRG_penlty}
\end{algorithm}

%Note that $\lambda_k$ approaches to $\lambda$ %
%%geometrically, %
%gradually.  %
We discuss the convergence as follows. %
\begin{prop}\label{prop:conver_RPRG}
Let $\Psi({\bX_k,\bE_k}) = ||\bX||_* + \lambda_k\Upsilon(\bE) + f({\bX,\bE})$, and $\{(\bX_k, \bE_k)\}$ be an infinite sequence of iterates generated by Algorithm \ref{Alg:PRG_penlty}. It follows that $\Psi(\bX_{k+1},\bE_{k+1}) \leq\Psi(\bX_{k},\bE_{k})$, and $\{(\bX_k, \bE_k)\}$ converges to a limit point $(\bX^*, \bE^*)$.
\end{prop}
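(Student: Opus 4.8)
The plan is to establish the proposition in two stages: first the monotone decrease $\Psi(\bX_{k+1},\bE_{k+1})\le\Psi(\bX_k,\bE_k)$, and then the convergence of the iterate sequence. Throughout I would write $\Psi_{\lambda}(\bX,\bE):=\|\bX\|_*+\lambda\Upsilon(\bE)+f(\bX,\bE)$, so that the quantity named in the statement is $\Psi_{\lambda_k}(\bX_k,\bE_k)$, and I would exploit the two-block structure of Algorithm~\ref{Alg:PRG_penlty}, analysing the $\bX$-update and the $\bE$-update in turn.

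For the $\bX$-update, $\bE$ is frozen at $\bE_k$ and $\bX_{k+1}=T_{L_{k+1}}(\bX_k,\bE_k)$ is the proximal Riemannian gradient step of Lemma~\ref{lemma:closed_solution}, whose step size $L_{k+1}$ exists by Lemma~\ref{lemma:step_size}. Since the term $\lambda_{k+1}\Upsilon(\bE_k)$ is common to both sides of the Armijo condition~(\ref{eq:armijo}) and cancels, that condition gives
\[
\|\bX_{k+1}\|_*+f(\bX_{k+1},\bE_k)\le\|\bX_k\|_*+f(\bX_k,\bE_k)+\frac{\beta}{L_{k+1}}\langle\grad f(\bX_k,\bE_k),\bzeta_k\rangle,
\]
and as $\bzeta_k$ is a descent direction the inner product is nonpositive, so $\Psi_{\lambda_{k+1}}(\bX_{k+1},\bE_k)\le\Psi_{\lambda_{k+1}}(\bX_k,\bE_k)$. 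For the $\bE$-update, $\bX$ is frozen at $\bX_{k+1}$ and $\bE_{k+1}=S_{\lambda_{k+1}}(\bB_k)$ is, by Table~\ref{table:update_E}, the exact global minimizer of the convex subproblem~(\ref{eq:update_E}) at level $\lambda_{k+1}$; hence it is at least as good as $\bE_k$, which yields $\Psi_{\lambda_{k+1}}(\bX_{k+1},\bE_{k+1})\le\Psi_{\lambda_{k+1}}(\bX_{k+1},\bE_k)$. Chaining the two inequalities gives $\Psi_{\lambda_{k+1}}(\bX_{k+1},\bE_{k+1})\le\Psi_{\lambda_{k+1}}(\bX_k,\bE_k)$. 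To replace $\lambda_{k+1}$ by $\lambda_k$ on the right I would invoke that $\lambda_k$ is non-increasing and that $\Upsilon\ge 0$, so that $\Psi_{\lambda_{k+1}}(\bX_k,\bE_k)\le\Psi_{\lambda_k}(\bX_k,\bE_k)$; this completes the first stage.

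For convergence, I would note that each summand of $\Psi_\lambda$ is nonnegative (the nuclear norm, the regularizer $\Upsilon$, and the quadratic penalty $f$), so the monotone sequence $\{\Psi_{\lambda_k}(\bX_k,\bE_k)\}$ is bounded below and therefore converges. Because $\rho\in(0,1)$, there is a finite $K_0$ with $\lambda_k=\lambda$ for all $k\ge K_0$, so the tail of the scheme is ordinary alternating minimization of the fixed function $\Psi_\lambda$. This $\Psi_\lambda$ is coercive, since $\|\bX\|_*$ controls $\bX$ and, using $\mB(\bE)=\bE$, the quadratic term controls $\bE$; hence its sublevel sets intersected with the closed variety $\mMLr$ are compact and the iterates remain in one such set, so Bolzano--Weierstrass furnishes a subsequence converging to some $(\bX^*,\bE^*)$, an accumulation point as claimed. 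The hard part is to upgrade subsequential convergence to convergence of the whole sequence: the sufficient-decrease inequality forces the successive increments to vanish and, together with the continuity of the two block updates, shows every accumulation point is a fixed point of the alternating map, but pinning down a unique limit is delicate and would, if full-sequence convergence is really required, be secured by a Kurdyka--{\L}ojasiewicz argument applied to the semialgebraic objective over $\mMLr$.
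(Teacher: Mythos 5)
Your proposal is correct in substance, and as far as can be judged (the paper relegates its own proof to a supplementary file not included here) it uses precisely the ingredients the paper sets up for this purpose: the Armijo sufficient-decrease condition (\ref{eq:armijo}) with Lemma~\ref{lemma:step_size} for the $\bX$-block, the fact that $S_{\lambda}$ in Table~\ref{table:update_E} exactly minimizes the convex $\bE$-subproblem (\ref{eq:update_E}), and boundedness below of $\Psi$. One step you handled that is genuinely needed and easy to overlook: since the proposition's $\Psi(\bX_k,\bE_k)$ carries the iteration-dependent weight $\lambda_k$, the chain of inequalities lives at level $\lambda_{k+1}$ and must be lifted to level $\lambda_k$ via $\lambda_{k+1}\le\lambda_k$ and $\Upsilon\ge 0$; without this the claimed monotonicity does not even typecheck. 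Two remarks on the second half. First, your claim that $\lambda_k$ is eventually equal to $\lambda$ relies on the $\max(\lambda_0\rho^{k-1},\lambda)$ rule in Algorithm~\ref{Alg:PRG_penlty}; the running text's $\min$ is evidently a typo (only the $\max$ rule with $\lambda_0\ge\lambda$ is non-increasing and matches the footnote's motivation), so you read it the right way. Second, your compactness argument (joint coercivity of the objective, closedness of the variety $\mMLr$) correctly delivers an accumulation point, and you are right that promoting subsequential convergence to convergence of the whole sequence requires the {\L}ojasiewicz/KL machinery; this is consistent with the paper's own practice, since even Theorem~\ref{sec:global} is proved by ``adapting Theorem 3.9 of Schneider and Uschmajew,'' which is exactly a {\L}ojasiewicz-type argument. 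So there is no error in your proposal; its only incomplete step --- full-sequence convergence --- is honestly flagged, attributed to the correct external tool, and is plausibly handled no more explicitly in the paper's supplementary proof.
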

Please find the proof in supplementary file. Thee stopping condition in (\ref{eq:stop_con_inner}) can be extended to RPRG by replacing $\Psi(\bX)$ with $\Psi({\bX,\bE})$.
 %However, we will propose an subspace pursuit itself can effectively address the convergence issue even without the continuation techniques.  which is also a critical point of $\Psi(\bX,\bE)$
%(where $r$ should be large enough such that $r> \rank(\bX^*)$)
%Focusing on the outlier cases, the  detailed algorithm is shown in Algorithm \ref{Alg:Act_PRG_penlty}, where RPRG is used to solve the subproblem. Note that PRG can also be incorporated into the SP framework. For convenience, we refer SP with RPRG and PRG to as SP-RPRG and SP-PRG, respectively.
\section{Subspace Pursuit for Solving Problem (\ref{eq:general_form})}\label{sec:framework}
%solve trace-norm regularized problems, where $r$ is supposed to be known
Both PRG and RPRG methods operate on $\mMLr$, which rely on the knowledge of $r$. Unfortunately, the parameter $r$ is usually unknown. Based on Theorem \ref{sec:global}, one should set a sufficiently large $r$ such that $r \geq \rank(\bX^*)$, where $\bX^*$ is an optimal solution of problem (\ref{eq:general_form}). However, the computational cost will dramatically increase if $r$ is too large.

Regarding the above issues, we propose a subspace pursuit (SP) paradigm to address problem (\ref{eq:general_form}). To introduce SP, we bring in an additional integer $\kappa$ which is \textbf{assumed to be several times smaller} than $\rank(\bX^*)$.
Taking the outlier cases for example, instead of doing RPRG with a large $r$, we gradually increase the rank of $\bX$ from $\rank(\bX) = 0$ (\egShijie~ $\bX=\0$) by a small value $\kappa$, and perform RPRG to the following subproblem with increased $t$.
%
%iteratively
%e initialize $\rank(\bX) = 0$ (\egShijie~ $\bX=\0$), and increase the rank of $\bX$ by $\kappa$ iteratively. For each iteration, we solve the following subproblem:
%
%\vspace{-0.20in}
%\begin{small}
\begin{align}\label{eq:form_penaly_fixed_rank}
\min_{\bX,\bE}~~\Psi({\bX,\bE}), ~~\mathrm{s.t.} ~~\rank(\bX) \leq t\kappa,
\end{align}
%\end{small}
%\vspace{-0.20in}
%
where $t$ denotes the iteration index. Essentially, the subspace pursuit addresses problem (\ref{eq:general_form}) by solving a series of subproblems in (\ref{eq:form_penaly_fixed_rank}) using RPRG on $\mM_{\leq t\kappa}$ (see Step 8 of Algorithm \ref{Alg:Act_PRG_penlty}), where $t=1, ..., T$ and $T$ denotes the maximum number of iterations.

%$\bX$ is restricted on $\mM_{\leq t\kappa}$ at the $t$th iteration, and

%Since $\kappa$ is much smaller than $r^*$, it follows that $t\kappa < r^*+\kappa$ in general.

%The whole procedure is depicted in Algorithm \ref{Alg:Act_PRG_penlty}, where the problem involved in Step 8 is detailed as follows:
% until some stopping conditions are achieved.

%In Steps xx-xx, for a new rank guess $\rank(\bX)= t\kappa$, we need to solve a
%
%where $t$ is the iteration index.

%\vspace{-0.20in}
%\begin{small}
%\begin{eqnarray}\label{eq:general_fixed_rank}
%\min_{\bX\in \mMLr, \bE} ~~\lambda||\bX||_* + \Upsilon(\bE)+  f(\bX,\bE),
%\end{eqnarray}
%\end{small}
%\vspace{-0.20in}
%
%where $\Upsilon(\bE) = ||\bE||_1$ or $||\bE||_{\twoone}$, and $f(\bX, \bE)$ is defined as
%
%\vspace{-0.20in}
%\begin{small}
%\begin{align}
%%&h(\bX,\bE) = \frac{\gamma}{2} ||\mA(\bX) + \mB(\bE) - \bD||_F^2, \nonumber\\
%f(\bX, \bE) =  \frac{\gamma}{2}||\mA(\bX) + \mB(\bE)-\bD||_F^2.\nonumber
%\end{align}
%\end{small}
%\vspace{-0.20in}
%
%
%
%
%we are ready to present our algorithm for solving the general trace-norm regularized problem in (\ref{eq:general_form}) based on PRG. %

%  Unlike existing methods which minimize the objective in (\ref{eq:form_penaly}) or (\ref{eq:lag_func}) directly,

\begin{algorithm}[h!]
\begin{algorithmic}[1]
\REQUIRE  Parameters $\kappa$, $\lambda$ and $\gamma$, initial $\lambda_0$, $\chi$, $\rho$ (where $\chi< \rho$), and stopping tolerance $\epsilon$ and $\varepsilon$.

\STATE Initialize  $\bX^0=\0$, $\bE^0 = \0$,  and $\lambda_0^{0} = \lambda_0$.

\STATE For $t = 1, ..., T$
\STATE Let $r = t\kappa$ and $s = (t-1)\kappa$. \STATE Let $\lambda_0^t = \lambda^{t-1}$, and $\lambda^t = \max(\lambda_0 \chi^{t},\lambda)$.

\STATE Compute $\bG = \gamma\mA^*(\mA(\bX^{t-1}) + \mB(\bE^{t-1})-\bD)$.
 %Compute $P_{T_{\bX^{t-1}}\mM_{s}}(\bG)$ and $\bXi_{r-s}$.
\STATE Compute $\grad{f(\bX^{t-1},\bE^{t-1})} \!\!=\!\! P_{T_{\bX^{t-1}}\mM_{s}}(\bG) + \bXi^{t-1}_{\kappa}$.\\

\STATE Choose $L_t$ to satisfy (\ref{eq:armijo}).
Set $\bX_0^{t} =T_{L_t}(\bX^{t-1}, \!\bE^{t-1})$ and $\bE_0^{t} =S_{\lambda_0^t}(\bB^{t-1})$, where $\bB^{t-1}\!=\!\!\bD-\mA(\bX^{t})$. \\

\STATE Update $(\bX^t, \bE^t)$ by calling RPRG to address  (\ref{eq:form_penaly_fixed_rank}) with initial input $(\bX_0^{t}, \bE_0^{t})$, $r$, $\lambda_0^t$, $\lambda^t$, and $\epsilon$. %
%Initialize $(\bX_{0}, \bE_{0})= (\bX^{t}, \bE^{t})$. Call Algorithm \ref{Alg:PRG_penlty} with input $(\bX_{0}, \bE_{0})$ and $r$, and obtain $(\bX_{k}, \bE_{k})$. Update $(\bX^t, \bE^t) = (\bX_{k}, \bE_{k})$.

\STATE Terminate if stopping conditions
are achieved.
%\STATE Let $s =  (t+1)\kappa$.
%\STATE ~~Do truncated SVD of rank $\kappa$ on $\bG$:\\~~$\bH = \mathrm{trunSVD}(\bG,\kappa)$.
% \STATE ~~Set  and $\bX_{0} = $.
\STATE End

\end{algorithmic}
\caption{Subspace Pursuit for Solving (\ref{eq:general_form}).}\label{Alg:Act_PRG_penlty}
\end{algorithm}
% After performing RPRG on $\mM_{\leq t\kappa}$

%Let $\bX_t^*$ be the limit point of RPRG at the $t$th iteration. If $\rank(\bX_t^*)<r = t\kappa$, $\bX_t^*$ must be a global solution according to Theorem \ref{sec:global}. As a result, $\bX_t^*$
At the $t$th iteration, SP either stops or increase $r$ from $t\kappa$ to $(t+1)\kappa$ and the domain of $\bX$ changes from a Riemannian manifold $\mM_{t\kappa}$ to $\mM_{\leq (t+1)\kappa}$.  As a result, we need to compute $\bXi_{r-s}$ in Step 6 to calculate $\grad{f(\bX^{t-1},\bE^{t-1})}$ according to equation (\ref{eq:grad}).

To accelerate the convergence speed, two techniques are critical, i.e. the warm start  and homotopy continuation techniques. To warm start, we prepare a good initial guess of $(\bX_0^{t}, \bE_0^{t})$ for RPRG in Steps 4-6 (Steps 4-6 are exactly Steps 4-5 in~Algorithm \ref{Alg:PRG_penlty}). To apply the continuation technique in RPRG. To facilitate its usage in SP, in Step 8, we adaptively set the initial $\lambda_0$ and target $\lambda$ for RPRG by $\lambda_0^t$ and $\lambda^t$ (see Step 4), respectively. Note the parameter $\chi$ should be smaller than $\rho$ in RPRG.  %We suggest choosing $\chi \in [0, +\infty)$ in this paper.

%Here, $\lambda_0^t$ and $\lambda^t$, where of Algorithm \ref{Alg:Act_PRG_penlty},

%Here, $\gamma_0^t$ and $\widehat{\gamma}^t$ .

%Regarding t
%we use an adaptive penalty parameter $\gamma = \gamma_0 \chi^t$.

%, which is advocated each iteration

%are also adopted to accelerate the convergence speed.

Note that PRG can be also incorporated into the SP framework. For convenience, we refer SP with RPRG and PRG to as SP-RPRG and SP-PRG, respectively.
%\subsection{Convergence}

SP increase the rank by $\kappa$ iteratively. Due to  limited size of $\bX$, SP will be stopped in limited steps. Moreover, the objective value $\Psi(\bX,\bE)$ monotonically decrease  w.r.t. $t$.
\begin{prop}\label{prop:decrease}
 Let $\{\bX^t, \bE^t\}$ be the sequence generated by Algorithm \ref{Alg:Act_PRG_penlty}. Then we have
 \begin{eqnarray}
 \Psi(\bX^{t},\bE^{t}) \leq \Psi(\bX^{t-1},\bE^{t-1}) - \beta||\bXi^{t-1}_{\kappa}||_F^2/L_t.
\end{eqnarray}
  %In other words, $\Psi(\bX^t,\bE^t)$ decreases monotonically w.r.t. $t$.
\end{prop}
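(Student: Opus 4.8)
The plan is to decompose the progress made during the $t$th outer iteration of Algorithm \ref{Alg:Act_PRG_penlty} into two stages and bound each separately. Stage one is the single rank-expansion step in Step 7, which moves from $(\bX^{t-1},\bE^{t-1})$ to the warm start $(\bX_0^{t},\bE_0^{t})$; stage two is the inner RPRG loop of Step 8, which refines $(\bX_0^{t},\bE_0^{t})$ into $(\bX^{t},\bE^{t})$ on $\mM_{\leq t\kappa}$. I would show that the entire claimed decrease $\beta\|\bXi^{t-1}_{\kappa}\|_F^2/L_t$ is already produced by stage one, while stage two can only lower $\Psi$ further, and then chain the two inequalities.

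For stage one, I would invoke the line-search condition (\ref{eq:armijo}) that defines $L_t$ in Step 7. The descent direction used in the expansion is the newly available component $\bzeta_t=-\bXi^{t-1}_{\kappa}$ of the Riemannian gradient $\grad f(\bX^{t-1},\bE^{t-1})=P_{T_{\bX^{t-1}}\mM_{s}}(\bG)+\bXi^{t-1}_{\kappa}$, with $s=(t-1)\kappa$ as in (\ref{eq:grad}). The key algebraic fact is that the two summands are Frobenius-orthogonal: $P_{T_{\bX^{t-1}}\mM_{s}}(\bG)$ lies in $T_{\bX^{t-1}}\mM_{s}$ whereas $\bXi^{t-1}_{\kappa}\in\mU^{\perp}\otimes\mV^{\perp}$, and a short computation from the tangent-space parametrization (\ref{eq:tangent_space}) with $\mU=\mathrm{ran}(\bX^{t-1})$, $\mV=\mathrm{ran}((\bX^{t-1})^{\trsp})$ shows these subspaces are orthogonal. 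Hence $\langle\grad f(\bX^{t-1},\bE^{t-1}),\bzeta_t\rangle=-\|\bXi^{t-1}_{\kappa}\|_F^2$, and (\ref{eq:armijo}) yields
\begin{eqnarray}
\Psi(\bX_0^{t},\bE^{t-1}) \leq \Psi(\bX^{t-1},\bE^{t-1}) - \beta\|\bXi^{t-1}_{\kappa}\|_F^2/L_t. \nonumber
\end{eqnarray}
(Equivalently, taking $\bzeta_t=-\grad f$ and using $\|\grad f\|_F^2\geq\|\bXi^{t-1}_{\kappa}\|_F^2$ from the same orthogonality gives the inequality.) I would then note that the subsequent update $\bE_0^{t}=S_{\lambda_0^t}(\bB^{t-1})$ solves (\ref{eq:update_E}) exactly, so replacing $\bE^{t-1}$ by $\bE_0^{t}$ cannot increase the objective, and the displayed bound passes to $\Psi(\bX_0^{t},\bE_0^{t})$.

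For stage two, I would apply Proposition \ref{prop:conver_RPRG}: RPRG started at $(\bX_0^{t},\bE_0^{t})$ generates a monotonically non-increasing sequence of objective values on $\mM_{\leq t\kappa}$, so $\Psi(\bX^{t},\bE^{t})\leq\Psi(\bX_0^{t},\bE_0^{t})$. Combining this with the stage-one bound gives the claim.

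The main obstacle is bookkeeping rather than a deep difficulty, and it concentrates in two places. First, the orthogonality $\langle P_{T_{\bX^{t-1}}\mM_{s}}(\bG),\bXi^{t-1}_{\kappa}\rangle=0$ must be checked cleanly from (\ref{eq:tangent_space}); this is the one computation that actually carries the result, since it is what converts a generic Armijo decrease into the specific $\|\bXi^{t-1}_{\kappa}\|_F^2$ term. Second, the homotopy continuation changes the regularization weight across outer iterations ($\lambda^t$ non-increasing), so I would verify that $\Psi$ is evaluated consistently on the two sides; observing that $\Upsilon(\cdot)\geq 0$ and $\lambda^t$ is non-increasing, a smaller weight only lowers $\Psi$ for a fixed pair and therefore does not break the inequality. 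With both points settled the chaining is immediate.
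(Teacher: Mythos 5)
Your proposal is correct and takes essentially the approach the paper intends (the paper defers the actual proof to its supplementary file, but the form of the bound dictates the argument): the decrease term is extracted from the Armijo condition (\ref{eq:armijo}) at the warm-start step (Step 7), using the orthogonality $\langle P_{T_{\bX^{t-1}}\mM_{s}}(\bG),\bXi^{t-1}_{\kappa}\rangle=0$ so that $\|\grad f(\bX^{t-1},\bE^{t-1})\|_F^2\geq\|\bXi^{t-1}_{\kappa}\|_F^2$, and this is then chained with the exact closed-form $\bE$-update and the monotone decrease of RPRG from Proposition \ref{prop:conver_RPRG}. Your handling of the homotopy weights (non-increasing $\lambda$ together with $\Upsilon\geq 0$) correctly closes the one bookkeeping gap that the varying regularization parameter could otherwise introduce.
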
%
Please find the proof in supplementary file.

\textbf{Stopping conditions} of SP. Let $\bX_t^*$ be the limit point of RPRG at the $t$th iteration.  According to Theorem \ref{sec:global}, if $\rank(\bX_t^*)< t\kappa$, $\bX_t^*$ must be a global solution. As a result, SP will stop at the $t$th iteration. According to Proposition \ref{prop:decrease}, if $||\bXi^{t-1}_{\kappa}||_F^2$ is very small, then there is no need to proceed. Then we may also stop the iterations if the following condition is achieved:
%
%\vspace{-0.20in}
%\begin{small}
\begin{eqnarray}\label{eq:stop_con_outer}
\frac{{\Psi(\bX^{t-1}, \bE^{t-1})} - \Psi(\bX^t,\bE^{t})}{\kappa\Psi(\bX^{t-1},\bE^{t-1})}\leq
\varepsilon,
\end{eqnarray}
%\end{small}
%\vspace{-0.25in}
%
where $\varepsilon$ denotes a tolerance value.

\subsection{Parameter settings}\label{sec:parameter_set}

For convenience of parameter setting, we suggest choosing the penalty parameter $\gamma$ in (\ref{eq:form_penaly_fixed_rank}) according to $\gamma = 1/({\nu}\sigma_{1})$, where $\nu$ is a scaling factor, and $\bsigma$ denotes the singular vector of $\mA^*(\bD)$.\footnote{The setting of $\gamma$ is consistent with the setting of $\mu$ in \emph{matrix lasso} in~\cite{Toh2010APG}, where $\gamma = 1$, and $\mu = \nu\sigma_{1}$ is suggested in general.} For robust cases, the parameter $\lambda$ in (\ref{eq:form_penaly}) is chosen by $\lambda = \delta \gamma d_m$, where $d_m$ denotes the mean of $|\bD|$. The integer $\kappa$ is chosen such that $\sigma_{i} \geq \eta \sigma_1, ~\forall i \leq \kappa,$ and $\sigma_{\kappa+1} < \eta \sigma_1$. Without loss of generality, we suggest setting $\nu \in (0.0001, 0.01)$, $\delta \in (0.01, 1]$, and $\eta \in (0.5, 0.9)$. One may also apply cross-validations to choose $\nu$ and $\delta$ regarding model parameters.  Lastly, in SP, we do not need to optimize each subproblem accurately. Therefore, in SP we suggest setting $\epsilon=0.01$ for PRG and RPRG.

%We will detail the specific settings of scaling factors $\nu$, $\delta$, and $\eta$ in experiments.

%\begin{lemma}
%
%\end{lemma}, and $\sigma_{1}$ is largest singular value of $\mA^*(\bD)$

%\subsection{Extension to Robust Cases}
%In practice, the observations in $\bD$ may have severe outliers with large magnitueds. We extend PRG to address the following robust case of problem (\ref{eq:general_trace_rank}):
\subsection{Complexity Analysis}

The complexity of SP mainly includes two parts, \ieShijie~the computation of $\bXi^{t}_{\kappa}$ which can be done by truncated SVD of rank $\kappa$ and the subproblem optimization by PRG or RPRG.

Here, we focus on the complexity of SP on MR. At the $t$th iteration of SP, the complexity of PRG or RPRG is $O((m +n)(t\kappa)^2+l t\kappa)$, where $t\kappa\leq r + \kappa$. For sufficiently sparse matrices like in MR, the truncated SVD of rank $\kappa$ in SP can be completed in $O((m+n)\kappa)$ using PROPACK~\cite{Larsen:2004}; while the truncated SVD in existing proximal gradient based methods takes $O((m+n)r)$, where $\kappa$ is several times smaller than $r$. Note that in general SP  needs only $\lceil r/\kappa\rceil$ times of truncated SVDs. Therefore, SP is cheaper than existing proximal gradient based methods on MR. The complexity comparison on LRR can be found in supplementary file.

%Thirdly, {RP} has good scaling characteristics for
%solving large-scale problems.
%For example, the only SVD calculations required are the best rank $t\rho$ approximation of $\bX^t+\bxi$ for the retraction and the best rank $\rho$ of the matrix $\Q$ in Step 2b. In all cases, the matrices involved are highly structured.
%Specifically, \kui{$\bX_k+\theta_k\bzeta_k$} is a rank $2t\rho$ matrix, and thus the full SVD can be computed very efficiently; see \cite{Vandereycken2013}. For $\Q = \G - P_t(\G)$,
%we can use PROPACK~\cite{Larsen:2004} or randomized low-rank approximation~\cite{Halko:2011} at a cost of $O ((|\G| + t \rho n)\rho)$ where $|\G|$ is the cost of one matrix-vector product with $\G$. For example, $|\G| = O(\widehat r n \log^2 n)$
%is highly sparse for MC.
\section{Related Work}

The authors in \cite{Mishra2013trace} exploited Riemannian structures and presented a trust-region algorithm to address trace-norm minimizations.
The proposed method, denoted by MMBS, alternates between fixed-rank optimization and rank-one updates. However, this method shows slower speed than APG on large-scale problems \cite{Mishra2013trace}. The authors in \cite{Liu2014} proposed a Grassmannian manifold method to address trace-norm minimizations on a fixed-rank manifold. In general, this method has similar complexity to {ScGrassMC} that also operates on Grassmannian manifold~\cite{Ngo2012}

Active subspace methods or greedy methods, that increase the rank by one per iteration, have gained great attention in recent years~\cite{Hazan2008,Jaggi2010,Shamir2011,Laue2012}. However, these methods usually involve expensive subproblems, and might be very expensive when the
%A ground-truth
true
%/A
rank is high. For example, Laue's method~\cite{Laue2012} needs to solve nonlinear master problems using %
%{BFGS} method, %
\shijie{the BFGS method}, %
which can be very expensive for large-scale problems. More recently, \citet{hsieh2014nuclear} proposed a novel active subspace selection method for solving trace-norm regularized problems. %
However, this method may %
%have slow convergence speed %
\shijie{suffer from} slow convergence speed %
due to the approximated SVDs and inefficient solvers for the subproblem optimization.  In \citet{tan2014riemannian}, the authors proposed a Riemanian pursuit (RP) algorithm which increases the rank more than one. However, this algorithm cannot deal with  trace-norm regularized problems.

%A stochastic subgradient descent method was proposed for trace-norm regularization~\citet{Avron2012}, but its efficiency might be limited due to the use of subgradients~\cite{hsieh2014nuclear}.
\section{Experimental Results}
We %
%verify %
\shijie{evaluate} %
the proposed methods %
on %
\shijie{for} %
two classical trace-norm based tasks, namely low-rank matrix completion and LRR based clustering.
All the experiments are conducted in Matlab (R2012b) on a PC installed a 64-bit operating system with an Intel(R) Core(TM)
i7 CPU (3.2GHz with single-thread mode) and 64GB memory.

\begin{figure*}[t!]
\center \subfigure[Relative objective difference  w.r.t. time.]{\label{fig:ObjDiffToyNoise}\includegraphics[trim = 1mm 0mm 1mm
0mm, clip, width=0.750\columnwidth]{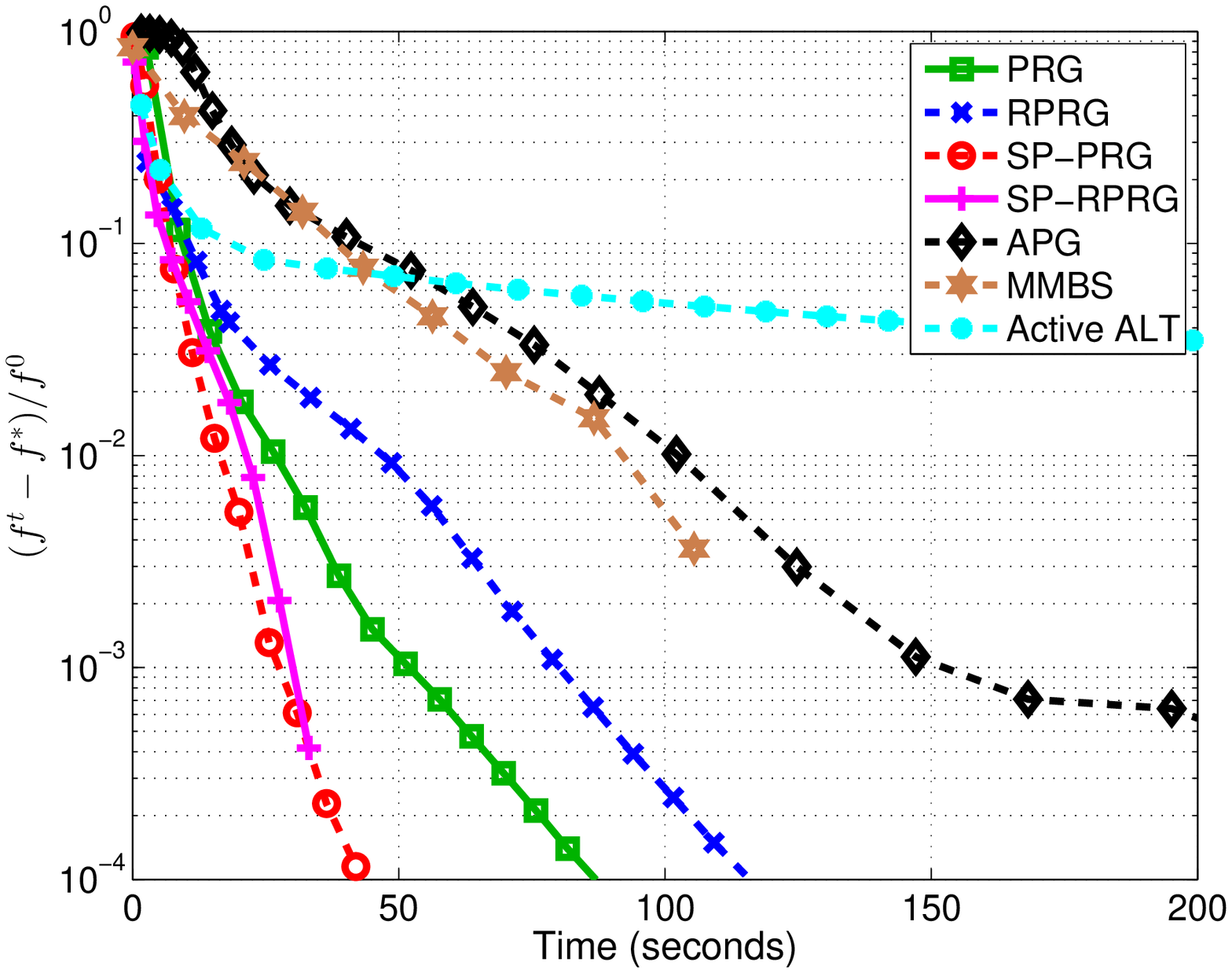}}
\subfigure[Testing RMSE values w.r.t. time]{\label{fig:TestRMSEToyNoise}
\includegraphics[trim = 1mm 0mm 1mm
0mm, clip, width=0.750\columnwidth]{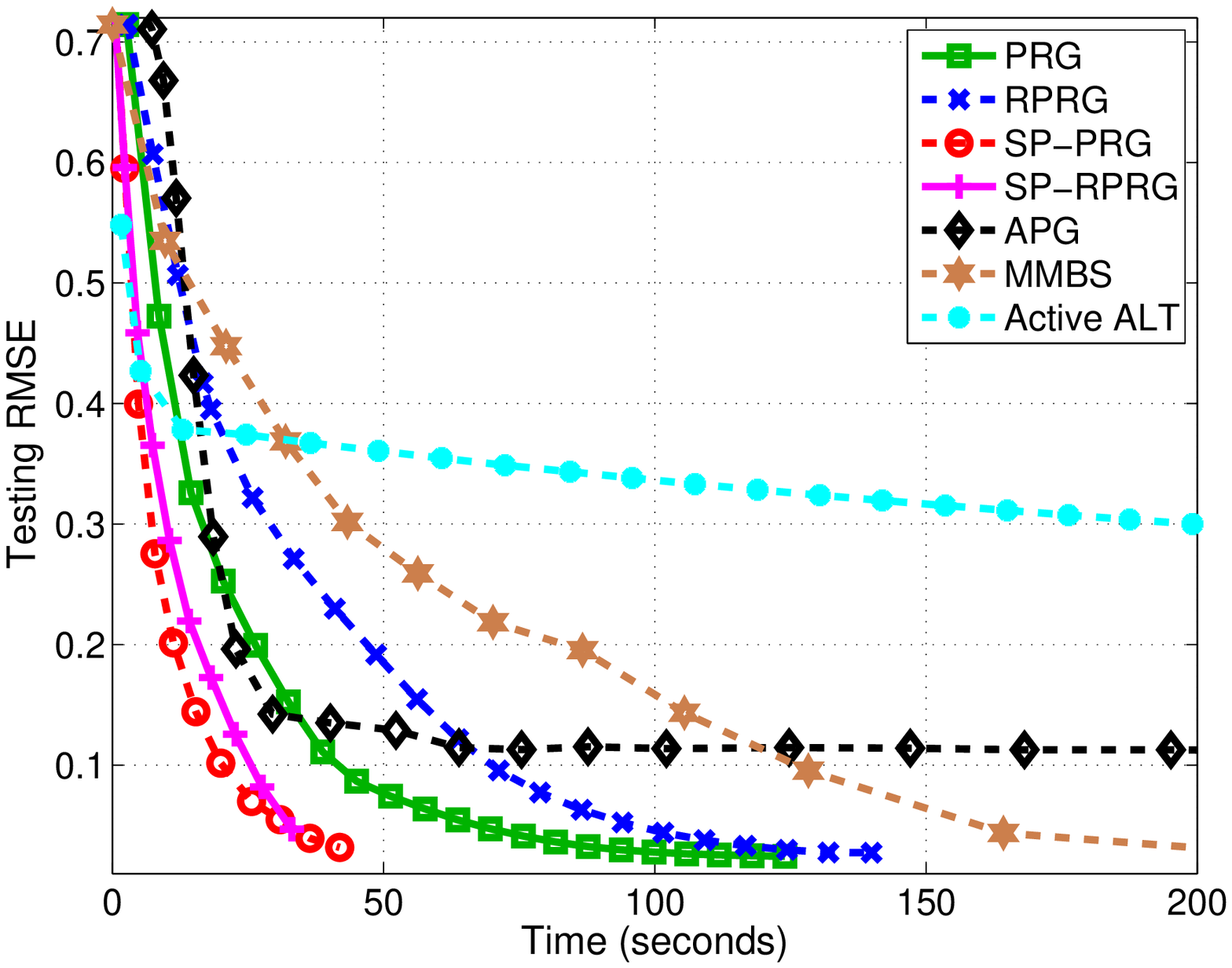}}
\caption{%
%Performance of various methods on \textbf{TOY1}. %
%The proposed  PRG, RPRG, SP-PRG and SP-RPRG converge much faster than other methods, and SP-PRG and SP-RPRG improves their counterparts, i.e., PRG and RPRG significantly.
\shijienew{Performance of various methods on \textbf{TOY1}.} %
%The proposed  PRG, RPRG, SP-PRG and SP-RPRG converge much faster than other methods, and SP-PRG and SP-RPRG improves their counterparts, i.e., PRG and RPRG significantly.
}\label{fig:toy1}
\end{figure*}

\begin{figure*}[t!]
\center \subfigure[Relative objective difference w.r.t. time.]{\label{fig:ObjDiffOutlier}\includegraphics[trim = 1mm 0mm 1mm
0mm, clip, width=0.750\columnwidth]{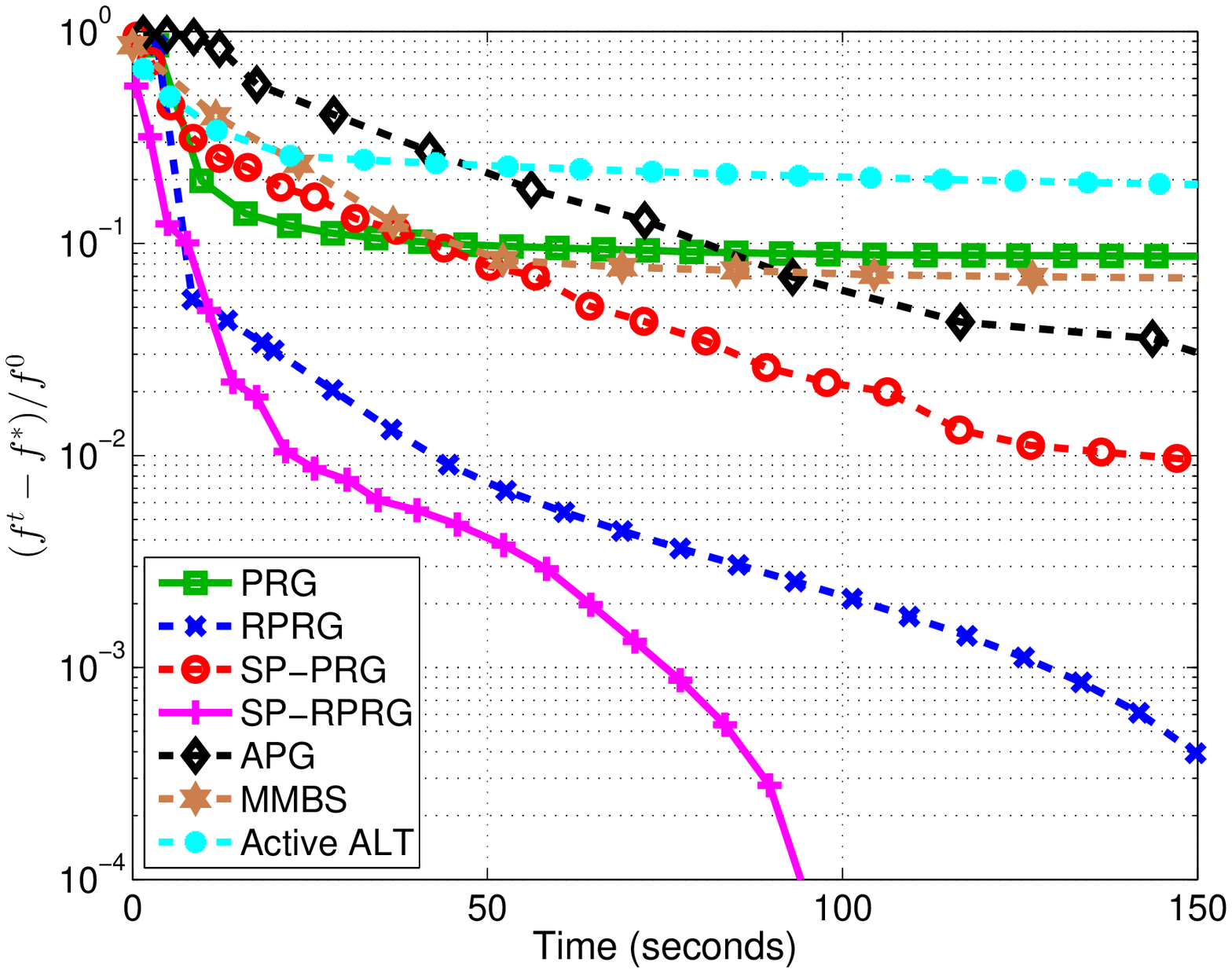}}
\subfigure[Testing RMSE values w.r.t. time]{\label{fig:TestRMSEOutlier}\includegraphics[trim = 1mm 0mm 1mm
0mm, clip, width=0.750\columnwidth]{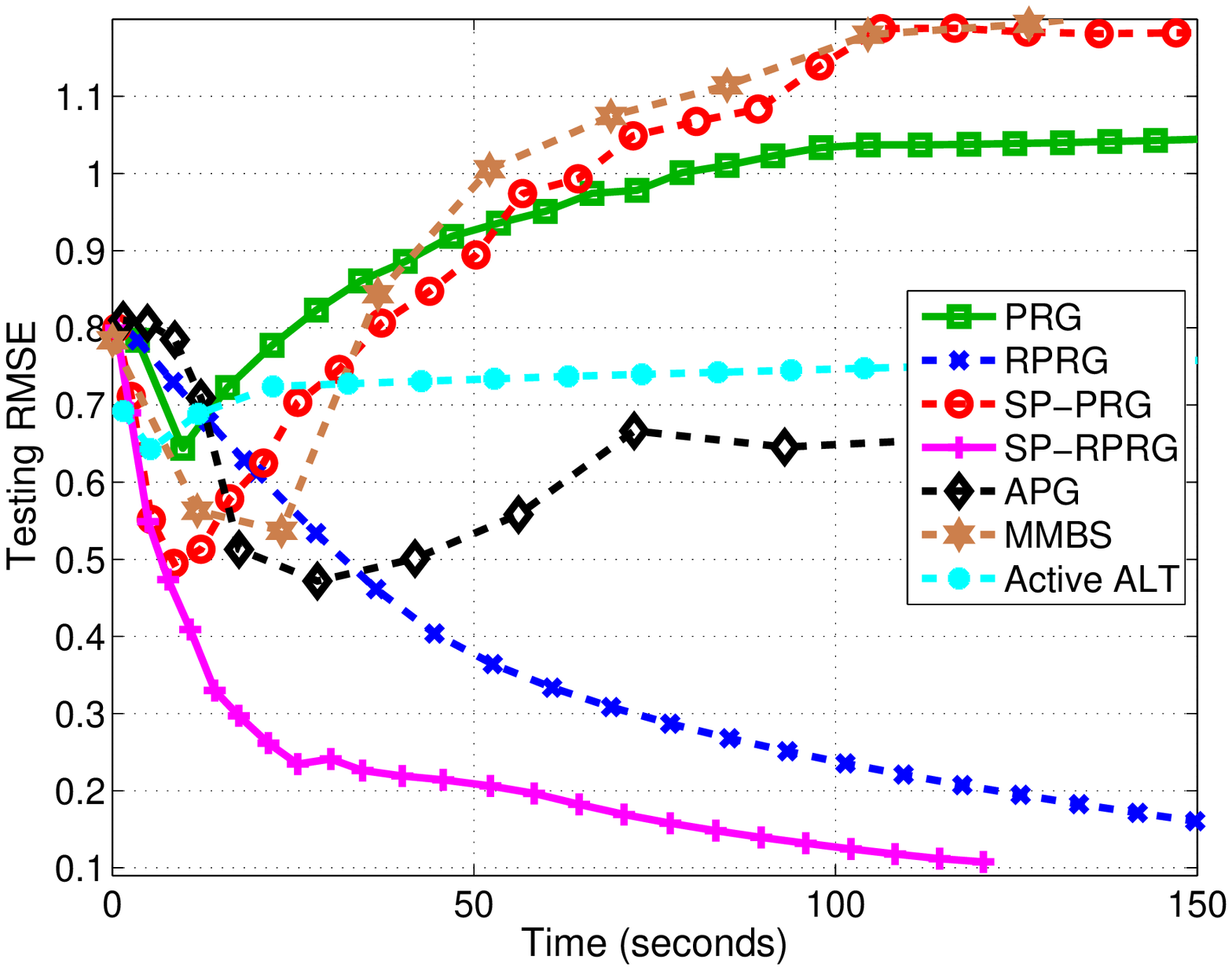}}
\caption{%
%Performance of various methods on \textbf{TOY2}, where $5\%$ of data are  disturbed by severe outliers. While other methods over-fit on this data, the proposed robust PRG (RPRG) and SP-RPRG still achieve promising testing RMSE values. %
\shijienew{Performance of various methods on \textbf{TOY2}, where $5\%$ of data are  disturbed by severe outliers. While other methods over-fit on this data, the proposed robust PRG (RPRG) and SP-RPRG still achieve promising testing RMSE.} %
}\label{fig:toy2}
\end{figure*}

\subsection{Experiments on Matrix Completion}

We study the performance of proposed methods, namely PRG, RPRG, SP-PRG and SP-RPRG, on the matrix completion task.
Three state-of-the-art trace-norm based methods, e.g.
APG~\cite{Toh2010APG}, MMBS~\cite{Mishra2013trace}, and Active ALT~\cite{hsieh2014nuclear}, are adopted as baselines. Moreover, to study the efficiency of proposed methods on matrix completion, we also compare several efficient fixed-rank methods, including RP~\cite{tan2014riemannian}, LMaFit~\cite{Wen2012}, ScGrassMC~\citep{Ngo2012},  LRGeomCG~\cite{Vandereycken2013}, qGeomMC~\cite{Mishra2012}.%\end{itemize}
\footnote{APG is available from \url{http://www.math.nus.edu.sg/~mattohkc/NNLS.html}; RP is available from \url{http://www.tanmingkui.com/rp.html}; Active ALT is available from \url{http://www.cs.utexas.edu/~cjhsieh/}; MMBS, {LMaFit}, {ScGrassMC}, {qGeomMC} and
{LRGeomCG} are available from:
\url{http://www.montefiore.ulg.ac.be/~mishra/fixedrank/fixedrank.html}.} %
%We did not report the results of some methods, %
\shijie{We do not report the results of some methods, } %
such as IALM~\cite{Zhouchen2010ALM} and method in \cite{Liu2014}, since they are either slower than the compared methods in this paper or the source codes are not available.

%({ALM}) and alternating(which uses a homotopy strategy to solve the \emph{matrix
%lasso} problem)
%direction method (ADM)~\cite{Zhouchen2010ALM}

We adopt the root-mean-square error (RMSE) testing set as a major %
%comparison matric:
\shijie{evaluation metric:} %
$\textrm{RMSE} =
\|\mP_{\Omega}(\bD-\bX^*)\|_F/\sqrt{(|\Omega|)}$, where ${\bX}^*$ denotes the recovered matrix, and $\Omega$
denotes the index set of a testing set,  and $\mP_{\Omega}$ denotes the orthogonal projection onto $\Omega$~\cite{Vandereycken2013}.

%\begin{figure*}[htp]
%\center \subfigure[Relative objective values versus
%iterations.]{\label{fig:Exact}\includegraphics[trim = 1mm 0mm 1mm
%0mm, clip, width=0.450\columnwidth]{result/exact.eps}}
%\subfigure[Number of CG iterations for each outer
%iteration.]{\label{fig:num_inner}\includegraphics[trim = 1mm 0mm 1mm
%0mm, clip, width=0.450\columnwidth]{result/number_inner.eps}}
%\caption{Comparison of exact and inexact {CRP} in noiseless matrix
%completion.}
%\end{figure*}
\begin{figure*}[t!]
\center \subfigure[Relative objective difference  w.r.t. time.]{\label{fig:RelObjVal}\includegraphics[trim = 1mm 0mm 1mm
0mm, clip, width=0.750\columnwidth]{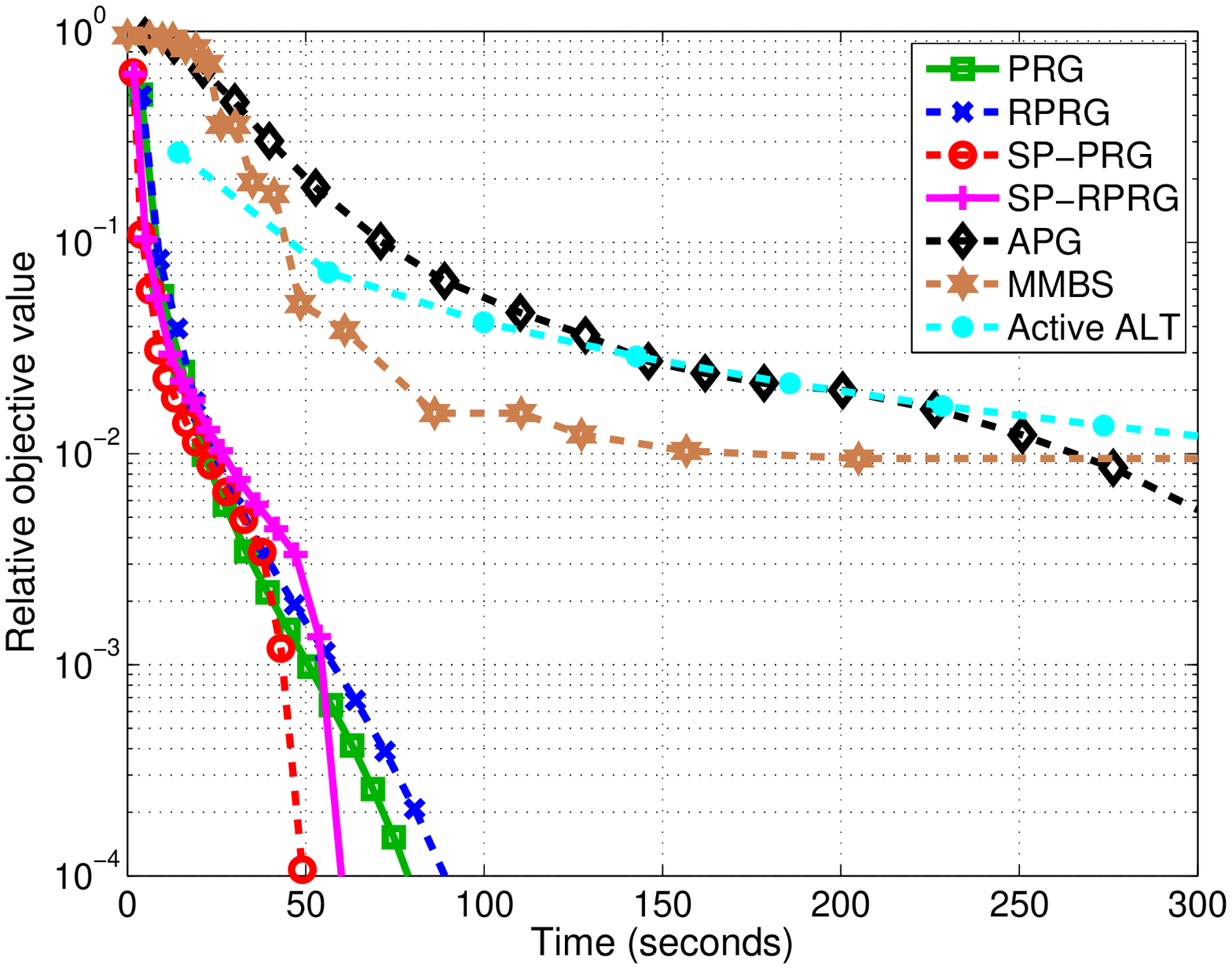}}
\subfigure[Testing RMSE values w.r.t. time]{\label{fig:TestRMSE}\includegraphics[trim = 1mm 0mm 1mm
0mm, clip, width=0.750\columnwidth]{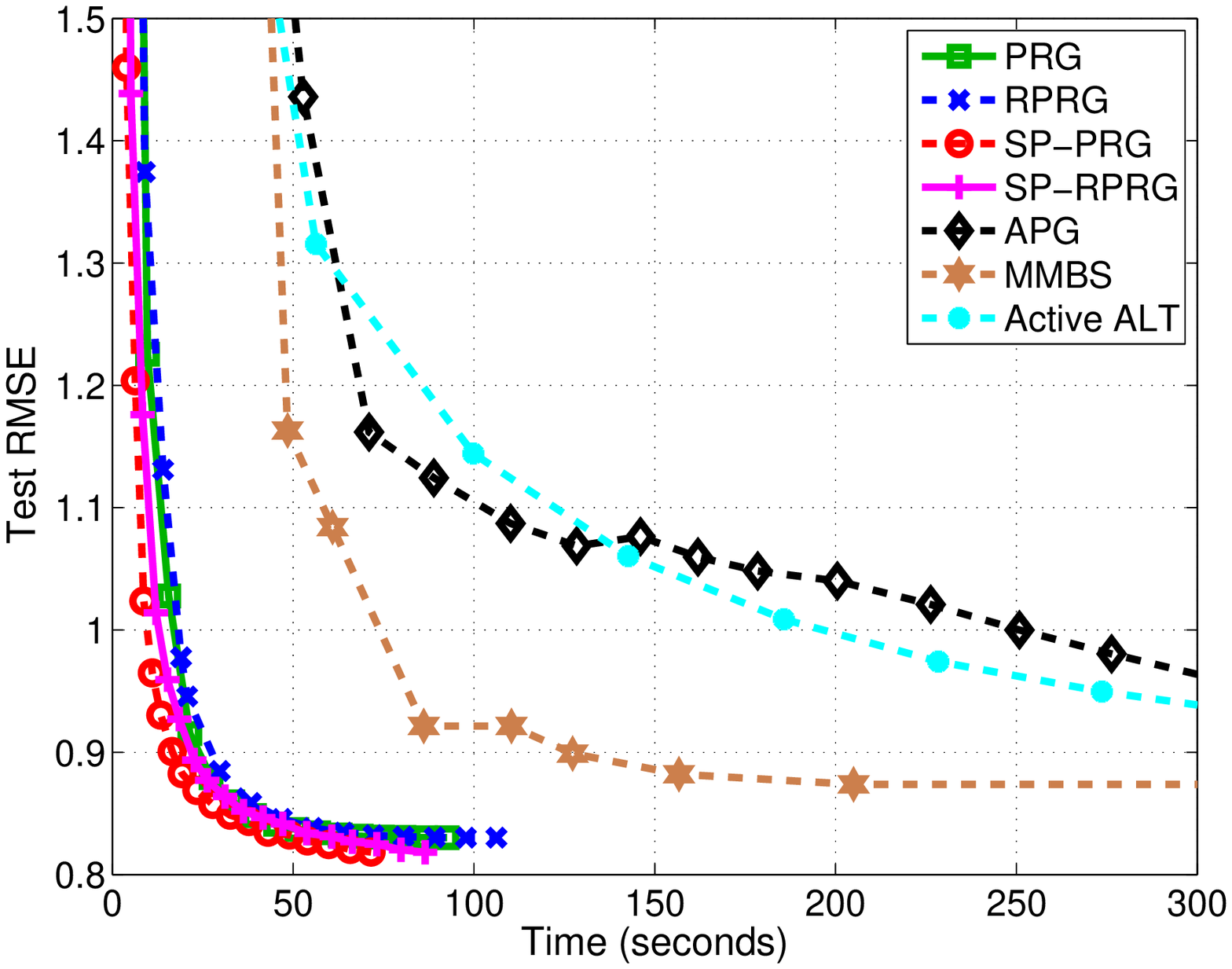}}
\caption{Performance of various methods on Movie-10M data set.}\label{fig:movie}
\end{figure*}
\subsubsection{Synthetic Experiments}\label{sec:synthetic}
Following~\citep{Ngo2012,tan2014riemannian}, we generate
ground-truth low-rank matrices ${\bD} =
{\bU} \diag({\bsigma}) {\bV}^{\top} \in \mmR^{m\times m}$ %
%of rank 50,
\shijie{of rank $r$}, %
where $\bU \in \textrm{St}_{r}^{m}, \bV \in
\textrm{St}_{r}^{m}$, $m=5000$, $r=50$ and $\bsigma$ is a %
%$50$-sparse vector with its nonzero entries of $\bsigma$%
\shijie{$50$-dimensional vector with its entries} %
sampled from %
%a {Uniform} distribution $[0, 1000]$.
\shijie{a {uniform} distribution $[0, 1000]$.} %
We sample %
%$l = r(m+n-r)\times \omega$ %
$\shijie{l = \omega r(2m-r)}$ %
entries from ${\bD}$ uniformly as the observations stored in $\bd \in \mmR^l$, where $\omega$ is an oversampling
factor~\cite{Zhouchen2010ALM}. Here, we set $\omega = 2.5$. We study two toy data sets whose observations are perturbed by two kind of noises:
%\begin{itemize}
%\itemSystems, Man, and Cybernetics, Part B: Cybernetics, IEEE Transactions on
In the first toy data set \textbf{TOY1}, each
entry of $\bd$ is perturbed by
%A ***Gaussian noise of strength
additive Gaussian noise of magnitude
%/A
 $0.01
{\|\bd\|_2}/{\|\bn\|_2}$, where $\bn \in \mmR^l$ is a Gaussian vector with each entry being sampled from $N(0, 1)$;
%In the second toy data \textbf{TOY2}, $5\%$ of the observations are further \textbf{disturbed by large strength outliers} uniformly sampled from $[-10, 10]$. %
\shijienew{The second toy data \textbf{TOY2} is obtained based on TOY1, by further perturbing $5\%$ of the observations with outliers uniformly sampled from $[-10, 10]$.} %
In %
%toy experiments, %
\shijienew{these synthetic experiments,} %
we set $\nu = 0.005$ $\eta = 0.65$, and $\delta = 0.1$.

Three trace-norm based  methods APG,  MMBS and Active ALT, are adopted as the baselines.
The \textsl{{Relative objective difference}} and \textsl{Testing RMSE} w.r.t. time on \textbf{TOY1} and  \textbf{TOY2} are reported in Figures \ref{fig:toy1} and \ref{fig:toy2}, respectively.
%\end{itemize}

%namely \textbf{Gaussian noises} and \textbf{sparse outliers} of large strength
According to Figure \ref{fig:ObjDiffToyNoise}, our proposed  PRG, RPRG, SP-PRG and SP-RPRG converge much faster than the comparators, %
%and SP-PRG and SP-RPRG improves their counterparts, i.e., PRG and RPRG significantly. %
\shijienew{and SP-PRG and SP-RPRG improve upon their counterparts (\ieShijie~PRG and RPRG) significantly.} %
From Figure \ref{fig:TestRMSEToyNoise}, the testing RMSE shows similar trends to the objective values. %
%However, our method achieve very low RMSE value in very short time. %
\shijienew{Note that, our methods thus achieve low RMSE values in very short times.} %
In general, the Active ALT method is slower than others, %
which may be%
%be caused by %
\shijienew{due to} %
the approximated SVDs and inefficient solvers for the subproblem optimization.

%<<<<<<< .mine
%From Figure \ref{fig:ObjDiffOutlier}, on \textbf{TOY2} which is disturbed by outliers, %
%%our proposed method in general converge faster than baselines. %
%\shijienew{our proposed methods} in general converge faster than baselines. %
%%However,
%\shijienew{Moreover}, from Figure \ref{fig:TestRMSEOutlier}, only the proposed RPRG and SP-RPRG achieve promising testing RMSE values; %
%=======
From Figure \ref{fig:ObjDiffOutlier}, on \textbf{TOY2} which is disturbed by outliers,  our proposed method in general converges faster than the baselines. %
However, from Figure \ref{fig:TestRMSEOutlier}, only the proposed RPRG and SP-RPRG achieve promising testing RMSE values. %
%while other method will over-fit %
\shijienew{While other methods over-fit} %
the data after several iterations due to \shijienew{the outliers.} %
%the large-strength outliers%
%<<<<<<< .mine
%\shijienew{the outliers}%
%. %
%%Moreover, %
%\shijienew{Besides}, %
%SP-RPRG converges faster than its counterpart RPRG. %
%=======
%
Not also that %
SP-RPRG converges faster than its counterpart RPRG. %
%This comparison experiment demonstrate the effectiveness and efficiency of the propose methods.
\shijienew{This observation demonstrates} the effectiveness and efficiency of \shijienew{our} propose methods.

%\begin{itemize}
%\item One
%\end{itemize}
%We have the following observations on Figure \ref{fig:toy2}.
%\begin{itemize}
%\item Two
%\end{itemize}

\subsubsection{Experiments on Real-world Data}\label{sec:synthetic}

We study the performance of {SP-PRG} and SR-PRG on three collaborative filtering data sets: MovieLens
with 10M ratings (denoted by Movie-10M)~\cite{HERLOCKER1999}, %
Netflix Prize dataset~\cite{KDD2007} %
and  and Yahoo! Music Track 1 data set~\cite{kddcup2011}. The statistics of these data sets are recorded in Table \ref{table:stat_data}. %
%The baseline methods include {APG}, {LRGeomCG}, {qGeomMC}, {Lmafit}. %

%APG~\cite{Toh2010APG} (which uses a homotopy strategy to solve the \emph{matrix
%lasso} problem), MMBS~\cite{Mishra2013trace},  Active ALT~\cite{hsieh2014nuclear}, RP~\cite{tan2014riemannian}, LMaFit~\cite{Wen2012}, ScGrassMC~\citep{Ngo2012},  LRGeomCG~\cite{Vandereycken2013}, qGeomMC~\cite{Mishra2012}.
 \begin{table}[h!]
\center \caption{Statistics of
datasets.}\label{table:stat_data}
 %\begin{sc}
 \begin{scriptsize}
\begin{tabular}{c|c|c|c}
\hline
       {Data set} & m & $n$  &    $|\Omega|$            \\
\hline
       {Movie-10M} & 71,567 & 10,677 &     10,000,054         \\

  {Netflix} & 48,089 &  17,770 &     100,480,507              \\

     {Yahoo} & 1,000,990 & 624,961&       252,800,275              \\
%\hline
\hline
\end{tabular}
\end{scriptsize}
%\end{sc}
\end{table}

In the first experiment, we only compare with the three trace-norm based methods,  e.g. {APG}~\cite{Toh2010APG}, MMBS~\cite{Mishra2013trace} and Active ALT~\cite{hsieh2014nuclear} on Movie-10M. We report the change of \textsl{{Relative objective difference}} and \textsl{Testing RMSE} w.r.t. time  in Figure \ref{fig:movie}. Here, we randomly choose $80\%$ of the ratings as training set and the remainder as the testing set.  From Figures \ref{fig:RelObjVal} and \ref{fig:TestRMSE}, our proposed methods show much faster convergence speed as well as faster decreasing of testing RMSE values.

%The following methods are adopted for the comparison:
%In addition, since matrices with long-tail singular values may not
%be exactly low-rank, to avoid the overestimate of the rank of the
%matrix to be recovered, a smaller value for $\rho$ is more
%desirable. Here, we set $\rho = 2$ for the two datasets.

\shijie{In the second experiment, the baseline methods include {APG}~\cite{Toh2010APG}, MMBS~\cite{Mishra2013trace}, {LRGeomCG}~\cite{Vandereycken2013}, {qGeomMC}~\cite{Mishra2012}, {Lmafit}~\cite{Wen2012}, Active ALT~\cite{hsieh2014nuclear},  ScGrassMC~\citep{Ngo2012} and RP~\cite{tan2014riemannian}.}~%
The ranks returned by {SP-PRG} are used as the rank estimations for fixed-rank methods, such as {ScGrassMC}, {qGeomMC} and {Lmafit}. We set $\nu = 0.001$ $\eta = 0.65$, and $\delta = 0.7$. %
Following~\cite{Laue2012,Shamir2011,Jaggi2010}, we report
the testing RMSE of different methods over 10 random 80/20
training/testing partitions. %
%as suggested in~\cite{Laue2012,Shamir2011,Jaggi2010}.

Comparison results are shown in Table~\ref{table:real_large}. Note that we did not obtain the results of some methods on two larger data sets, i.e. Netflix and Yahoo, due to the very expensive computation cost. We thus leave the results blank. %
According to the table, %
%SP-PRG and SP-RPRG \shijie{generally} perform the best among all the compared methods in terms of testing RMSE and computational efficiency.
\shijienew{the proposed SP-PRG and SP-RPRG methods achieve better testing RMSE than RP with comparable time. Note that RP relies on carefully designed stopping conditions to induce low-rank solutions, and cannot deal with outliers~\cite{tan2014riemannian}.} %
In particular, SP-PRG and SP-RPRG achieve significant improvements \shijienew{in terms of testing RMSE} on the Yahoo data set.

%It is
%worth mentioning that we use the rank detected by {RP} as the rank
%estimation for {LRGeomCG} and {qGeomMC}. Therefore, {RP} can be much
%faster than these two methods if the cost of the model selection is
%considered.

\begin{table}[!htb]
\center \caption{Experimental results on real-world
datasets, where time is recorded in seconds.}\label{table:real_large}
\begin{scriptsize}
 %\begin{sc}
\begin{tabular}{c|c|c|c|c|c|c}
\hline
%\multirow{2}{*}{Dataset} &
\shijienew{\multirow{2}{*}{Method}} &
\multicolumn{ 2}{|c|}{Movie-10M} & \multicolumn{ 2}{|c}{Netflix}  & \multicolumn{ 2}{|c}{Yahoo}\\
\cline{2-7}
& RMSE &  Time & RMSE & Time & RMSE & Time\\
\hline
APG     	 	 	&	1.094	&	810.01 	 &  1.038	&	2883.80  &  --	&	 -- 		 \\
LRGeomCG	 	 	&	0.823	&	57.67 	 & 0.860	&	2356.86  &25.228	&18319	  \\
QgeomMC     	 	&	0.836	&	96.41 	 & 0.897	&	9794.75  &  24.167	&	 82419 	 \\
Lmafit	    	 	&	0.838	&	133.86 	 & 0.876	&	2683.73  &24.368	&24349\\
%Lmafit-A	&	0.961	&	288.38 	 & 0.975	&	3947.32 	 \\
ALT         	 	&	0.855	&	917.17 	 &  --	    &	 -- 	 &  --	&	 -- 	 \\
MMBS       	 		&	0.821	&	441.10 	 &  --	    &	 --	     &  --	&	 -- 	\\
ScGrassMC   	 	&	0.845	&	216.07 	 & 0.892 	&	4522.68  &24.954	&37705	 \\
RP	         	 	&	0.818	&	46.56 	 & 0.858	&	1143.02  &23.451	&12456\\
\hline
%SP-PRG    	 	 	&	\textbf{0.817}	&	53.42 	 &  \textbf{0.855}	&	1057.35  &\textbf{22.644}	&15972\\
%SP-RPRG 	&	\textbf{0.815}	&	67.73 	 &  \textbf{0.857}	&    1245.15 	&\textbf{22.537}	&17263 \\
SP-PRG    	 	 	&	{0.817}	&	53.42 	 &  {0.855}	&	1057.35  &{22.644}	&15972\\
SP-RPRG 	&	{0.815}	&	67.73 	 &  {0.857}	&    1245.15 	&{22.537}	&17263 \\

\hline
\end{tabular}
\end{scriptsize}
%\end{sc}
\end{table}

\subsection{Experiments on LRR Based Subspace Clustering}
%For a performance comparison of LRR solvers, %
\shijie{To compare our proposed SP-RPRG method with the existing LRR solvers in \cite{lin2011linearized,LRRPAMI},} %
we conduct experiments on the Extended Yale Face Database B (\textbf{ExtYaleB}) for face clustering, and the Human Activity Recognition Using Smartphones dataset {(\textbf{HARUS})} \cite{anguita2012human} for human activity clustering.~%\shijie{[anguita2012HARUS]} %
Following \cite{LRRPAMI}, the clustering performance is measured by \emph{clustering accuracy}, namely the number of correctly clustered samples over the total number of samples. %

The {ExtYaleB} dataset contains $2,414$ frontal face images of~38 subjects with different lighting, poses and illumination conditions, where each subject has round 64 faces. Following \cite{LRR2010}, we use $640$ faces from the first 10 subjects. Each face image is resized to $48\times42$ pixels and then reshaped as a $2016$-dimensional gray-level intensity feature. %
The HARUS dataset is a large dataset (containing 10,299 signals w.r.t.~6 activities) with data collected using embedded sensors on the smartphones carried by volunteers on their waists, %
when they are conducting daily activities (\emph{e.g.},~walking, sitting, laying). %
The captured sensor signals are pre-processed to filter noise and post-processed. %
Finally, a 561-dimensional feature vector with time and frequency domain variables is extracted for each signal. %

\def \ratiohere{0.48}
\begin{figure}[t]%[htp]
\center
\subfigure[Running time (in log scale).]{\label{fig:TimeCompareLRR}\includegraphics[trim = 1mm 0mm 1mm
0mm, clip, width=\ratiohere\columnwidth]{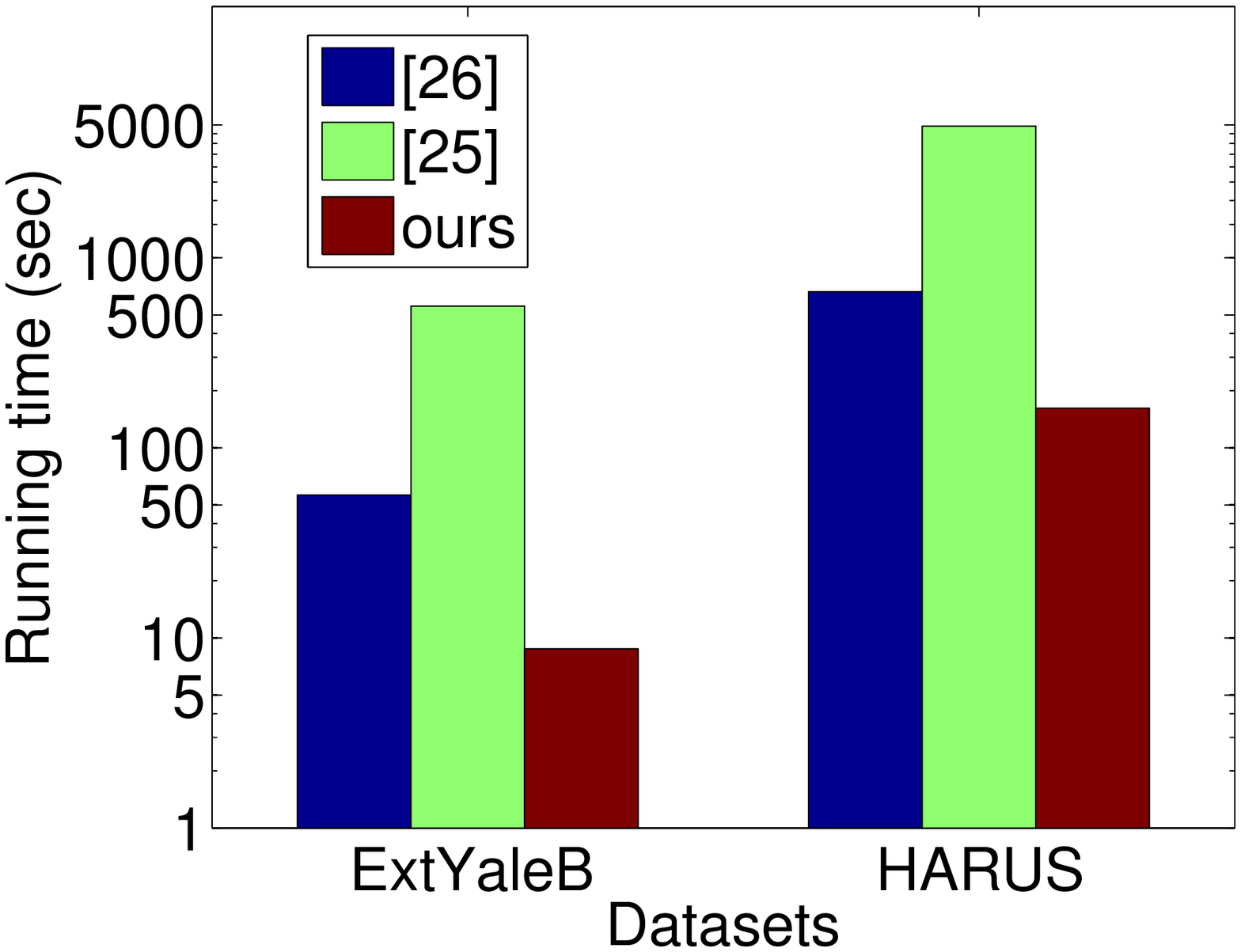}}
\subfigure[Clustering accuracy.]{\label{fig:AccCompareLRR}\includegraphics[trim = 1mm 0mm 1mm
0mm, clip, width=\ratiohere\columnwidth]{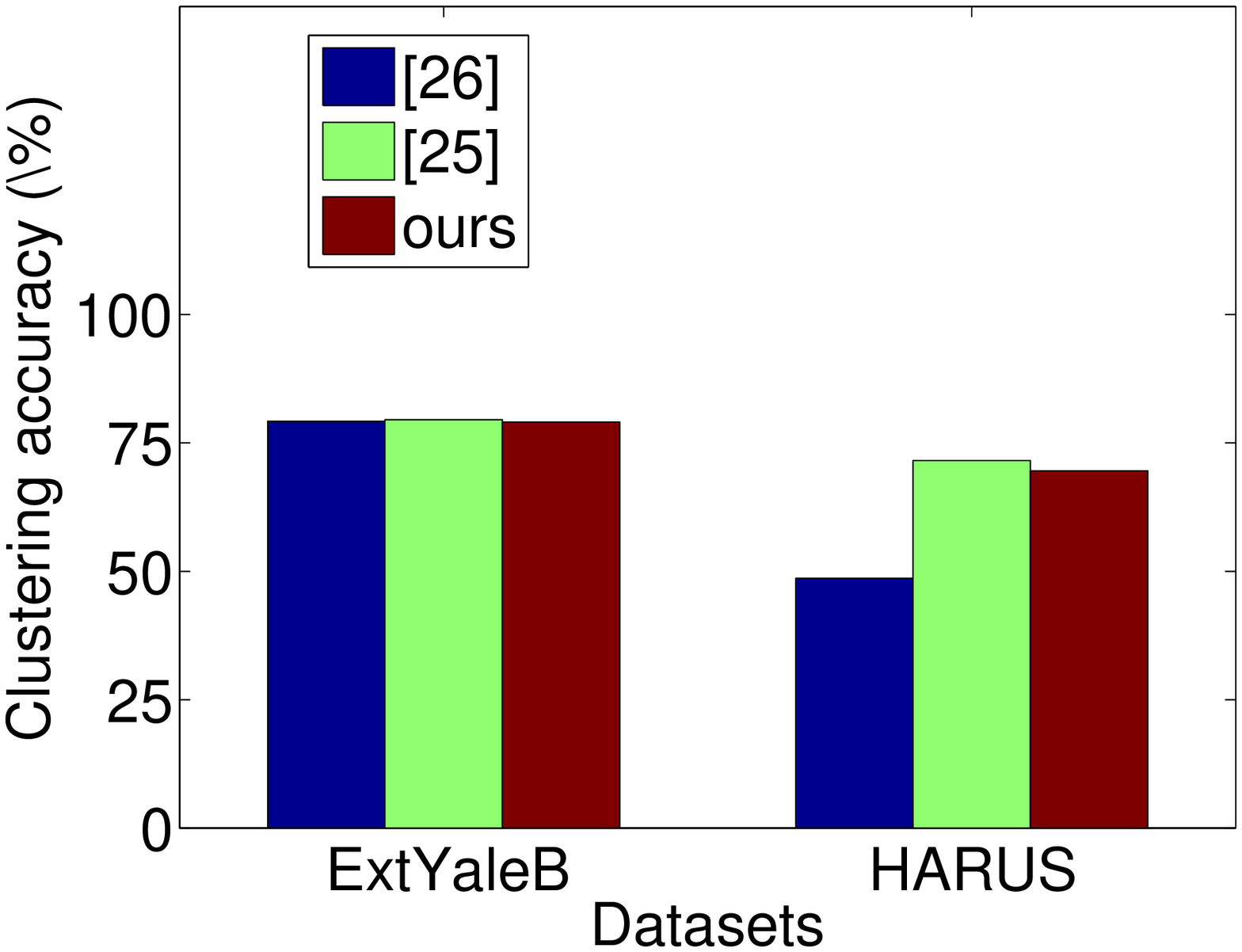}}
\caption{
\shijie{%
Comparing the running times and clustering accuracies of different LRR solvers on two datasets.%
}
}
\end{figure}
The best clustering accuracies and the corresponding running times are reported in %
%Table.~\ref{table:TimeAndErr}. %
\shijie{Figure~\ref{fig:AccCompareLRR}~and~Figure~\ref{fig:TimeCompareLRR}, respectively.} %
It can observed that, %
%our algorithm %
\shijie{our SP-RPRG method} %
outperforms the two existing LRR solvers in terms of efficiency, %
%since our algorithm does not  involve frequent SVDs~w.r.t. large matrices. %
since our algorithm does not \shijie{frequently involve} SVDs~w.r.t. large matrices. %
Moreover, our algorithm achieves comparable clustering performance with {\cite{lin2011linearized}}. %
In contrast, the LRR solver in \cite{LRRPAMI} achieves lower clustering accuracy on the HARUS dataset, %
possibly because that algorithm is not  guaranteed to obtain a globally optimal solution. %

\section{Conclusion}
Classical proximal methods may require many large-rank SVDs when addressing the trace-norm regularized problems on vector spaces. To overcome this, we first propose a proximal Riemannian gradient (PRG)  method to address trace-norm regularized problems over a matrix variety $\mMLr$, where $r$ is supposed to be known. By performing optimization on $\mMLr$, PRG does not require SVDs, thus can greatly reduce the computation cost. %
A robust version of PRG method has also been proposed to %
%solve robust trace-norm regularized problems. %
\shijienew{handle the outlier cases.} %
To address  general trace-norm regularized problems, a subspace pursuit strategy is proposed by iteratively activating a number of active subspaces. Extensive experiments on two classical trace-norm based tasks, namely low-rank matrix completion and LRR based clustering, demonstrate the superior efficiency of the proposed methods over other methods.

\small
\bibliographystyle{plain}
\end{document}